\def\eqref#1{equation~\ref{#1}}
\def\1{\bm{1}}
\DeclareMathAlphabet{\mathsfit}{\encodingdefault}{\sfdefault}{m}{sl}
\SetMathAlphabet{\mathsfit}{bold}{\encodingdefault}{\sfdefault}{bx}{n}
\definecolor{softblue}{RGB}{90,120,200}
\title{\textsc{LiSeCo}: Linear Semantic Control for Language Generation}
\author{\name Emily Cheng \email emilyshana.cheng@upf.edu \\
      \addr Universitat Pompeu Fabra, Barcelona, Spain
      \AND
      \name Carmen Amo Alonso \email camoalon@stanford.edu \\
      \addr Stanford University, CA, USA}
\theoremstyle{plain}
\newtheorem{theorem}{Theorem}[section]
\newtheorem{proposition}[theorem]{Proposition}
\newtheorem{lemma}[theorem]{Lemma}
\newtheorem{corollary}[theorem]{Corollary}
\theoremstyle{definition}
\theoremstyle{remark}
\newtheorem{remark}[theorem]{Remark}
\begin{document}

\maketitle

\begin{abstract}
  The prevalence of Large Language Models (LLMs) in critical applications highlights the need for controlled language generation methods that are both computationally efficient and enjoy performance guarantees. To address this need, we use a common model of concept semantics as linearly represented in an LLM's latent space. In particular, we take the view that natural language generation traces a trajectory in this continuous semantic space, realized by the language model's hidden activations. This view permits a control-theoretic treatment of text generation in latent space, in which we propose Linear Semantic Control (LiSeCo), a lightweight, gradient-free intervention that dynamically steers trajectories away from regions corresponding to undesired meanings. In particular, we propose to directly intervene, in an online fashion, the activations of the token that is being generated in embedding space.
  Crucially, LiSeCo does not simply steer activations towards a desirable region. Instead, it relies on classical techniques from control theory to precisely \emph{control} activations in a context-dependent way, and guarantees that they are brought into a specific pre-defined region of embedding space that corresponds to allowed semantics. The intervention is computed in closed form according to an optimal controller formulation, minimally impacting generation time. This control of the activations in embedding space allows for fine-grained steering of attributes of the generated sequence. We demonstrate that our approach is effective on different tasks---toxicity, sentiment, and language (English/Spanish) steering---while maintaining text quality.
\end{abstract}


\section{Introduction}

Large Language Models (LLMs) have become widespread in critical applications such as content moderation and real-time information dissemination \citep{zeng2024large}. As a result, strategies that enforce constraints on LLMs' generated text are increasingly needed. To address this challenge, controllable text generation has emerged as a pivotal research area. 
 
Several approaches have been proposed towards controllable text generation \citep{kumar2021controlled,lu2021neurologic,li2022diffusion,qin2022cold}. Of them, a popular approach is prompt engineering \citep{luo2023prompt,bhargava2023s,cai2023transparent}, where natural language prompts are carefully chosen at input time to steer generation. Other approaches modify LLM weights to achieve the desired outputs \citep{yao2023editing,li2023unveiling}. Lastly, some methods engineer model \emph{activations} to steer them into the representations of desired outputs \citep{dathathri2019plug,hernandez2023inspecting,konen2024style,li2024inference, rodriguez2024controlling, wu2024reft,im2026unifiedunderstandingevaluationsteering}. 

Despite current efforts, ensuring the controllability of these models remains a challenge. In particular, existing methods offer steering capabilities, rather than true \emph{control} over model behavior. Current methods typically nudge a target attribute in a certain direction \citep{li2023inferencetime,turner2023activation,rodriguez2024controlling,wu2024reft}, but lack formal guarantees on the steering outcome. 
For instance, ReFT \citep{wu2024reft} operates solely on prompt representations and do not intervene in the model’s activations as generation unfolds; ActAdd \citep{turner2023activation}, like other methods \citep{li2023inferencetime,rodriguez2024controlling}, indiscriminately applies a fixed-size intervention without verifying whether the steering goals is already satisfied. Thus, achieving robust and verifiable controllability remains a critical goal for safe deployment of LLMs. We clarify this distinction by defining two terms that are often conflated in the literature: \emph{steering} and \emph{control}.

\begin{tcolorbox}[colback=gray!5!white, colframe=black!75!white, title= Steering vs. Control]
\textbf{Steering} refers to interventions that bias the model's internal representations or outputs toward a desired outcome—such as reduced toxicity or positive sentiment—without enforcing guarantees on success. Most steering methods operate by learning a direction in latent space that correlates with desired outputs and nudging the model towards it. In contrast, \textbf{control} implies an explicit mechanism that enforces constraints on model representations with \textit{formal guarantees}. A controlled generation system \emph{ensures} that representations or outputs lie within a well-defined, often numerically specified, set of acceptable values.
\end{tcolorbox}

Control of model \emph{behavior} is an open problem. However, as a step towards controlling model behavior, we focus on control at the internal \emph{activation} level.
To this end, we propose to use control theory to tackle controlled language generation. Specifically, optimal control theory \citep{kirk2004optimal} offers principled methods to steer trajectories in latent space that enjoy theoretical guarantees on the performance of the intervention. Our intervention method, which we call Linear Semantic Control (LiSeCo), derives from a theoretical formulation of controlled text generation. Our contributions are both theoretical and empirical: (1) we formally pose LLM control in activation space as a constrained optimization problem and provide its closed-form solution with guarantees on where the resulting activations lie in that space; (2) we study how control in the activation space can, in theory, translate to controllable token generation during decoding, and (3) we empirically demonstrate our method on text attribute steering for toxicity, sentiment, and output language. We confirm, with experiment corroborating theory, that LiSeCo dynamically controls the activation's trajectory during generation to avoid disallowed concepts while maintaining text quality and minimal impact on inference time latency. Experimentally, we show that principled \emph{control in activation space} lends itself to reliable \emph{steering of the output attributes.}

\begin{figure}[h]
    \centering
    \includegraphics[width=1\textwidth]{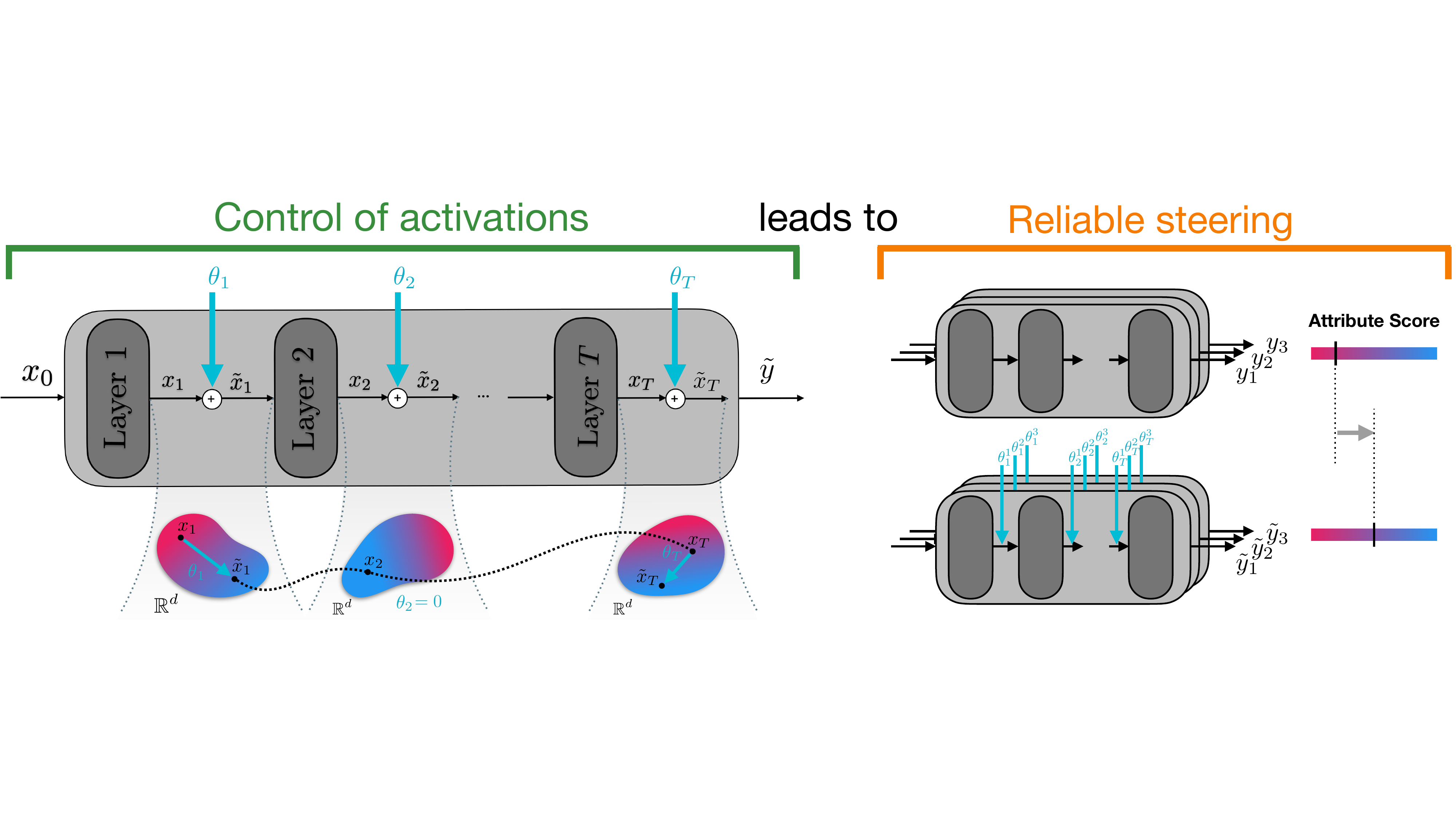}
    \caption{The LiSeCo intervention is computed as the solution to an \textbf{optimal control problem}, whose value is dependent on the current activation ($x_t\in\mathbb R^d$). When activations naturally fall inside some pre-defined bounds, the value of the intervention is zero. However, when activations fall outside of some pre-defined bounds, the intervention \emph{controls} the activation to \emph{guarantee} that the updated state $x_t+\theta_t\in\mathbb R^d$ lies in the desired location of the space. This precise control in the activation space yields to fine-grained steering of the output sequence in token space according to the attribute of interest.}
    \label{fig:approach}
\end{figure}

\subsubsection*{{Contributions of LiSeCo}}

LiSeCo, like most steering methods in the literature \citep{li2023inferencetime,rimsky-etal-2024-steering,rodriguez2024controlling}, intervenes in LLM activation space by adding a lightweight vector. However, LiSeCo differs from past approaches in that the latter do not provide \emph{formal} guarantees on the effectiveness of the steering, compliance, or accuracy with which the control goal is achieved, nor at the activation level neither in the resulting token generations. Instead, LiSeCo controls the activations during generation and is \emph{theoretically guaranteed} to steer them into the allowed region. Our work differs from existing literature by the following novel contributions:

\begin{enumerate}[topsep=1pt,noitemsep]
    \item \textbf{{Optimal Control Formalization with Theoretical Guarantees.}}  
    {We frame online LLM intervention as an optimal control problem, which we use to provide an efficient closed-form solution that ensures outputs fall within a target attribute range. Unlike steering methods, which nudge activations toward desired outcomes without guarantees, our method enforces attribute bounds as hard constraints, constituting true control over activations.} 
     {The optimal formulation ensures that the intervention is minimal, i.e., it has the smallest magnitude required to transport the activation into the desired region of representation space. This avoids over-steering, which preserves the naturalness of the resulting outputs, and is \emph{context-dependent}, i.e., only intervenes when current activations are outside of the allowed region. This is in contrast to other interventions in the literature (see Results, Section \ref{sec:results}).} Providing guarantees on the output text attribute requires rigid assumptions; we provide sufficient conditions for activation control to translate to output text control in Appendix \ref{app:identification}.
    \item \textbf{Interpretability.}  
    Our approach targets of continuous-valued attributes (e.g., toxicity, sentiment) using interpretable numerical scores. {Rather than implicitly influencing the model with an intervention that nudges representations in an ad-hoc direction, we directly specify a target range and guarantee compliance in embedding space. This affords control over generation with a greater level of granularity as compared to existing approaches. For instance, our method allows for bidirectional steering, meaning that it can be used to both lower or increase an attribute in a given range.} Moreover, the level of guarantee compliance in \emph{token space}, given compliance in activation space, serves as an interpretability tool that tests the functional representation of concepts and their causal relevance in generation.

    \item \textbf{Closed-form, Low-latency Online Intervention.}  
    The intervention is computed analytically in closed form, avoiding the need for backpropagation or iterative optimization at inference time. This yields minimal computational overhead compared to methods such as FUDGE \citep{yang-klein-2021-fudge} or PPLM \citep{dathathri2019plug}.
\end{enumerate}


We extend the vision of \citet{dalrymple2024towards} by demonstrating a concrete instantiation of guaranteed safe AI principles in a real-world language modeling task. Though \cite{Soatto:etal:2023} apply theoretical tools from control to LLM text generation, to the best of our knowledge, our method is the first to propose a control-theoretic intervention whose theoretical guarantees are validated in practice. We analyze other state-of-the-art activation-based methods, such as ReFT or AcT, under a control theoretic lens in \Cref{app:liseco-act-reft}.

\section{Related Work}
\label{others}

All controlled generation methods aim to modify some text attribute, such as toxicity, while maintaining fluency. Ultimately, all methods work towards this goal by modifying the LLM's final probability distribution, either directly or indirectly. We can situate where different method classes intervene, viewing an LLM as a series of $T$ function compositions corresponding to the $T$ layers, where $s$ is a sequence of tokens:
$$\textcolor{purple}{\mathbb P_{LM}}(\cdot | \textcolor{teal}{s_{<i}}) = \textcolor{brown}{f_T} \textcolor{blue}{\circ} \textcolor{brown}{f_{T-1}} \cdots \textcolor{blue}{\circ} \textcolor{brown}{f_1} (\textcolor{teal}{s_{<i}}) := \text{LLM}(\textcolor{teal}{s_{<i}}).$$

\textbf{\textcolor{purple}{Decoding-based methods}} fix the function $\text{LLM}:= f_T \circ f_{T-1} \circ \cdots f_1$ and directly edit its output probability distribution $\mathbb P_{LM}(\cdot | s_{<i})$ over the next token $s_i$ \citep{yang-klein-2021-fudge,liu-etal-2021-dexperts,krause-etal-2021-gedi-generative}. These methods require access to an external evaluator whose feedback is used to calibrate token probabilities, which can result in high inference latency.

\textbf{\textcolor{teal}{Prompt engineering}} steers the LLM's output by varying the input context $s_{<i}$, keeping the function $\text{LLM} := f_T \circ f_{T-1} \circ \cdots f_1$ fixed \citep{luo2023prompt,bhargava2023s,cai2023transparent,wei2022chain,li2021prefix}. Prompts are often highly task-specific, requiring either manually crafting or ad-hoc computationally-taxing techniques, and success can be brittle to prompt choice \citep{weber-etal-2023-mind}. While the space of natural language prompts is discrete, LLM weights and activations live in continuous high-dimensional space, which is more expressive; then, rather than search over discrete prompts, other approaches that exploit this expressivity directly intervene in the internals of the model.

Of them, \textbf{\textcolor{brown}{weight-based methods}} modify the functions $f_i$ themselves, which permanently constrains the space of final probability distributions $\mathbb P_{LM}$. These methods comprise, e.g., reinforcement learning from human feedback \citep{NEURIPS2022_b1efde53}, instruction-tuning, parameter-efficient adaptation \citep{Hu:etal:2022}, or targeted weight editing \citep{rome,belrose2023leace}. In such approaches, weights are modified according to the goal of the controlled generation by, for instance, learning the necessary update \citep{de2021editing,mitchell2021fast}, or localizing and editing target parameters encoding specific knowledge \citep{dai2022knowledge,meng2022locating,meng2022mass,li2024pmet}. Pitfalls range from potential inconsistencies and distortions, to the fact that weight-based methods can only correct errors in the LLM's parametric knowledge, but not in-context \citep{li2023unveiling}.


\textcolor{blue}{\textbf{Activation editing methods}}, such as LiSeCo, fix $\text{LLM} := f_T \circ f_{T-1} \circ \cdots f_1$ but intervene at the domain of each $f_i$, where introducing a steering vector modifies the input to $f_i$ \citep{li2023inferencetime,turner2023activation}. These interventions can be seen as restricting the domain of each $f_i$, eventually constraining the space of probability distributions $\mathbb P_{LM}$ when composed up through the layers. A key advantage of activation steering is rapid adaptation that can be made \emph{context-dependent}. {Often, this is achieved with \emph{linear interventions}, i.e. interventions $\theta_t$ such that layer $t+1$ receives as input $x_t+\theta_t$, where $x_t$ is the output of layer $t$.} An initial work in this domain was Plug and Play \citep{dathathri2019plug}, where a linear intervention is computed at every layer. The control goal is encoded as the objective function in an optimization that is then solved via back-propagation, adding significant computational overhead at inference time. Subsequent approaches also compute linear modifications to the latent state, but reduce computational overhead, act on only a few layers \citep{subramani2022extracting,konen2024style}, pre-compute steering vectors to avoid back-propagation \citep{turner2023activation}, or address the issue of computational efficiency at the expense of optimality (the intervention is not formulated as an optimizer) \citep{li2024inference}. Approaches like REMEDI \citep{hernandez2023inspecting} or ReFT \cite{wu2024reft} find optimal interventions to achieve different target outputs, but these are only used to edit representations in the initial forward pass. Lastly, AcT \citep{rodriguez2024controlling} learns an optimal transport map between two distributions of outputs (e.g., toxic and nontoxic), and applies this lightweight map online to the representation being generated. 
None of these existing methods provide a principled \emph{control} strategy---defined here as one that guarantees the activations meet a precise target specification---rather than merely \emph{steering} them in a general direction with the hope of reaching a desired region, often disregarding intermediate regions. In contrast, our method offers control by provably characterizing the distributional structure of the activation space.
This, in turn, enables a more fine-grained steering of the token generation, including bidirectional steering along the full spectrum of the attribute to be controlled.

\section{Problem Statement}

We present the problem studied in this paper, framing it as standard optimal control \citep{kirk2004optimal}. 

\subsection{Problem Formulation}
Given a language model, controlled language generation aims to steer the model's output into a desired one. We study the problem of setting attributes of the model's output text, like sentiment or toxicity, to a certain range. In practice, this range can be quantified via numerical scores, for example, constraining text perplexity to between $0$ and $30$, text toxicity to a subset of the Likert scale from $1$ to $5$, or likelihood of having a positive sentiment greater than a probability of $0.75$. Formally, an \emph{attribute} is a map $a: \Sigma^* \to \mathcal A$ from a language model's output in $\Sigma^*$ (the space of strings) to a numerical score or categorical label in $\mathcal A$. In this work, we consider how to {steer the output generation} of an \emph{already trained model} towards a user-defined desired range $\mathcal A^* \subset \mathcal A$.
Specifically, the requirements for the generated output sequence are twofold: its latent trajectory (a) is \emph{guaranteed} to lie in an ``allowed region" that corresponds to $\mathcal A^*$ in output attribute space, and (b) stays as close as possible to that of the original output sequence, so that text quality is not compromised. In doing this, two questions need to be answered:
\begin{enumerate}[noitemsep, topsep=1pt]
    \item Given desired attribute scores $\mathcal A^* \subset \mathcal A$, how can the allowed region be defined for a given language model in latent space?
    \item How can an intervention be designed to \emph{guarantee} that the result stays within the allowed region, as defined by scores, while retaining maximal similarity with the original model?
\end{enumerate}

In what follows, we answer the above questions and show that the proposed approach adds minimal computational overhead to language generation without modifying model weights. 

\subsection{Approach} \label{sec:approach}

\begin{figure}[t]
    \centering
    \includegraphics[width=0.75\textwidth]{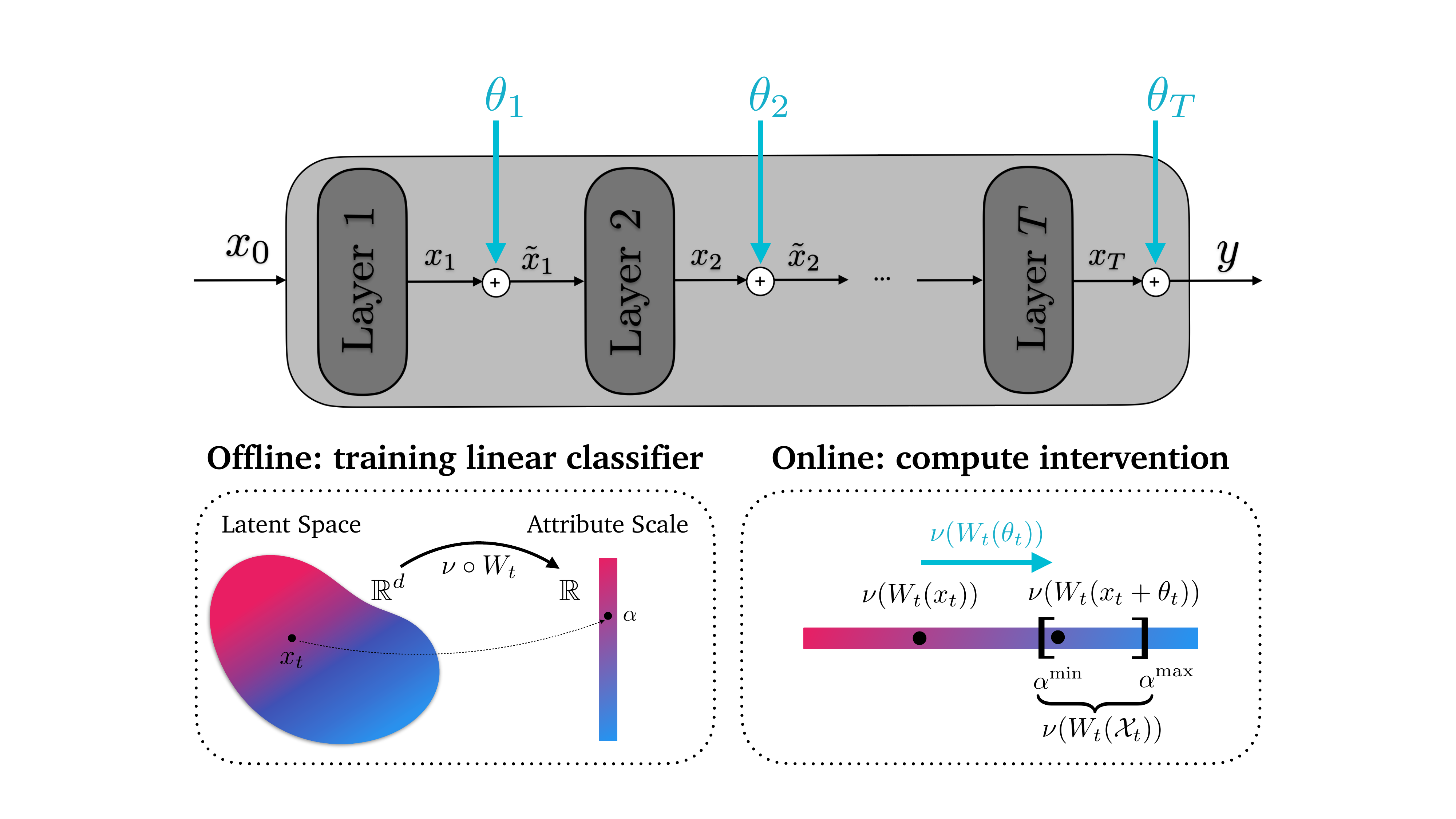}
    \caption{LiSeCo is based on applying a control intervention to the activations after each layer. The intervention is the result of applying a probing classifier $f_t=\nu\circ W_t$, composed of a linear map $W_t$ followed by a nonlinearity $\nu$, mapping from the latent space $\mathbb R^{d}$ to the attribute space $\mathbb R$. Given an input sequence, the probe is trained to map the activation of each layer, $x_t$ for every layer $t$, to its corresponding attribute score for the input sequence, $\alpha$. The classifier is then used at inference time to characterize the allowable region ($\mathcal X_t$) to which each latent state $\tilde x_t$ is constrained. Keeping trajectories (sequences of $\{x_t\}_{t=1}^T$) out of the disallowed region in latent space is equivalent to keeping their image out of the disallowed region in attribute space. At inference time, the state in latent space ($x_t\in\mathbb R^d$) is mapped via the learned classifier. If it falls outside of the bounds (forbidden region), an intervention ($\theta_t \in\mathbb R^{d}$) is computed via \textbf{optimal control} as to guarantee the updated state $x_t+\theta_t\in\mathbb R^d$ lies in the allowable region.}
    \label{fig:approach}
\end{figure}

We employ a control-theoretic approach at the level of activations. Given the sequential, feedforward nature of LLM layers, we consider each forward pass to realize a trajectory through the layers' activation spaces. In particular,
 \begin{equation}\label{eqn:dynamics}
    x_0 =  E(s), \quad x_{t+1} = \ell_{t+1}(x_t), \quad y = U(x_T), \qquad \text{with } t=0,\dots,T-1 
 \end{equation}
where $E$ and $U$ are the embedding and unembedding maps respectively, $\ell_t$ is the $t^{th}$ LLM layer, $T$ is the number of layers in the LLM, $s\in\Sigma^*$ is the prompt sequence, and $x_t\in\mathbb R^d$ is the latent representation of string $s$ after layer $l_t$.

Our strategy is to find a region $\mathcal X_t$ of layer $t$'s latent space corresponding to the desired output range $\mathcal A^*$. Concretely, we provide a control mechanism by altering each layer's activation, such that latent trajectories are guaranteed to lie in $\mathcal X_t$. That is,
  \begin{equation}
     x_0 =  E(s), \quad \tilde x_t = x_t+\theta_t(x_t), \quad x_{t+1} = \ell_{t+1}(\tilde x_t), \quad y = U(x_T), \quad \text{with } t=0,\dots,T-1,
 \end{equation}
where $\theta_t(x_t) \in\mathbb R^d$ is a control input to the activation after layer $t$. For each forward pass, this control mechanism is to be applied to a number of layers: as others have shown that semantic steering performs best when done in intermediate layers \citep{rimsky-etal-2024-steering}, the layers to be controlled is a design parameter, $\mathcal T\subset[0,\dots,T-1]$, that we explore experimentally in Section \ref{sec:results}. 
We note that this control intervention acts directly on the representation of the token to be generated, is designed online, and it depends on all previous tokens in the sequence. 

The goal of this paper is to design the control input $\theta_t: \mathbb R^d\to\mathbb R^{d}; x_t\mapsto \theta_t(x_t)$ such that $\tilde{x}_t:=x_t+\theta_t(x_t) \in \mathcal X_{t}$ is guaranteed for each intervened layer $t\in\mathcal T$. In what follows, we provide an overview of the approach, which we illustrate in Fig. \ref{fig:approach} and  present in mathematical detail in Section \ref{sec:theory}.
 
\subsubsection{At post-training time (offline): Semantic Probe.} 

We want to nudge the output attribute towards our desired range $\mathcal A^*$ by intervening in latent space. Doing so requires access to the attribute scoring function $a: \Sigma^* \to \mathcal A$, which could be given by, e.g., an off-the-shelf neural toxicity scorer. 
Crucially, we assume the existence of a map $f_t: \mathbb R^d \to \mathcal A$ that maps the activation space of layer $t$ to output attribute ratings (this assumption is rather strict, see discussion in \Cref{app:identification}).
Let the region $\mathcal X_t$ in latent space be the desired output space $\mathcal A^*$'s \emph{pre-image} under $f_t$. Then, the problem of constraining $a \in \mathcal A^*$ reduces to constraining the activation $x_t \in \mathcal X_t$. 

To simplify the exposition, let the attribute rating space $\mathcal A$ take continuous values in $\mathbb R$ (categorical ratings are a straightforward extension of this case). Let $\mathcal A^* = [\alpha^{\min}, \alpha^{\max}]$ be the user-defined range of allowed scores. Then for a given layer $t$, the corresponding allowed region $\mathcal X_t$ is given by
\begin{equation}
    \mathcal X_t := \{x\in\mathbb R^d|\ \alpha^{\text{min}}\leq f_t(x)\leq \alpha^{\text{max}}\}.
\end{equation}
For simplicity, and due to the empirical success of linear probing \citep{pmlr-v235-park24c}, we consider the case where $f_t$ consists of a \emph{linear map} followed by a possibly nonlinear \emph{monotonic activation}. That is, at each layer $t$, define $f_t: \mathbb R^d \to \mathbb R; x_t \mapsto a(s)$, such that
\begin{equation}\label{eqn:score}
    f_t(x_t) = \nu(W^\intercal_t x_t),
\end{equation}
where $W_t$ is a matrix and $\nu$ a strictly monotonic nonlinear map of the user's choice. 
This class of functions is expressive enough to handle both linear regression and classification, where in the latter case, the nonlinear activation $\nu$ may be given by the sigmoid to output a probability. Finally, using a training set of labeled examples, which we call the \emph{constraint set}, we learn the linear probe $f_t$ for each layer $t$ via linear regression or classification. This introduces a critical assumption that the target attribute is \emph{linearly represented} in the activation space of each LLM layer \citep{pmlr-v235-park24c}, which we verify empirically in \Cref{sec:results}.


We remark that LiSeCo permits an interpretation of activation space in terms of \emph{output constraints} $[\alpha^{\text{min}},\alpha^{\text{max}}]$, where the output range may correspond to, for instance, human ratings. Importantly, this setup allows us to \textbf{directly \emph{control} activations} instead of merely steer them into a particular direction, since we can use the trained probes $f_t$ to restrict attributes to specifically chosen and interpretable scores. 
 
\subsubsection{At inference time (online): Optimal Control in  Semantic Space} In this step, we use the trained probes $f_t$ to intervene the representation at each layer $t$. In particular, we set up the intervention to be \emph{additive}, i.e., the modified representation $\tilde x_t = x_t+\theta_t(x_t)$ is the sum of the original representation $x_t$ and the optimal control input $\theta_t(x_t)$, where the intervention $\theta(x_t)$ is computed sequentially at each layer $t$ during the forward pass. In a slight abuse of notation, we abbreviate $\theta_t$ as 
\begin{equation}
\theta_t=\theta_t(x_t; W_t, \nu, \alpha^{\text{min}}, \alpha^{\text{max}})\quad \text{, requiring} \quad \tilde x_t = x_t+\theta_t \in \mathcal X_t.
\end{equation}

Mathematically, at layer $t$ we solve an optimization problem over $\theta_t$ where the pre-computed probe $f_t$ enters as a hard constraint in the formulation. This control strategy guarantees that the latent state $x_t$ remains in the allowed region and retains maximal similarity with the original model. We emphasize that $\theta_t$ is \textbf{computed online} and \textbf{is gradient-free at inference time}. The specific expression and derivation for $\theta_t$ is provided in Section \ref{sec:theory}.

\section{Optimal Controller for Language Generation}
\label{sec:theory}

In this section, we describe the theoretical contribution of this work. First, we provide the offline and online algorithms for the approach described in the previous section. Then, we provide the mathematical details, expressions, and derivations that ground all of this. In particular, we show the training procedure for the  probing classifier that allows for linear (additive) control interventions. Then, using the probing classifier, we design a controller to restrict text generation to the safe region. The optimal intervention is derived in closed form, thus computationally efficient at inference-time. Lastly, we compare existing methods with LiSeCo and prove a control theoretic interpretation for their proposed approaches as well.

\subsection{Identification of Allowed Region}
Given a labeled constraint set $\mathcal D = \{s^{(i)}, \alpha^{(i)}\}_{i=1}^N$ of strings $s^{(i)}$ and their attribute scores $a(s^{(i)}):=\alpha^{(i)}$, the first step is to learn a linear probe $f_t$ at each layer $t$. Here, $f_t$ maps the encoded latent state $x_t\in\mathbb R^{d}$ representation of string $s$ to its attribute score $a(s) =\alpha \in \mathcal A$.\footnote{Note that the learning task is regression-like, not classification-like---we want the probes to be calibrated to the scoring function, not just the binary labels.} 

In the probe training step, the objective is to minimize a layerwise loss with respect to the probe's weights $W_t$ \begin{equation}\label{eqn:train_loss}
    \min_{W_t} \frac{1}{N} \sum_{(s, \alpha) \in \mathcal D} \mathcal L\left (\nu(W_t^\top x_t(s)), \alpha\right ), 
\end{equation}
where the function $\mathcal L(\cdot, \cdot)$ is a user-defined loss function such as the mean-squared error or cross entropy loss.

Algorithm \ref{alg:offline} summarizes the post-training computations to be carried out offline.

\begin{algorithm}[H] 
\caption{LiSeCo: Probe-training step (offline)}\label{alg:offline}
\begin{algorithmic}[1]
\State \textbf{Input:} Labeled dataset $\{(s^{(i)}, \alpha^{(i)})\}$
\State \textbf{Output:} Classifier weights $W_t$
\For{$t \in \mathcal{T}$}
    \State Extract activations from Eq.~\ref{eqn:dynamics}: $x_t^{(i)} \gets \ell_t(\ldots(E(s^{(i)}))$
    \State Train probe using Eq.~\ref{eqn:train_loss} on $\{(x_t^{(i)}, \alpha^{(i)})\}$ to obtain $W_t$
\EndFor
\end{algorithmic}
\end{algorithm}

\subsection{Optimal Controller Design in Linear Feature Space}
\label{sec:setup}

We want to design an intervention at each layer $t$ such that the original activation $x_t$ is modified to new one $\tilde x_t$, and $\tilde x_t$ is guaranteed to lie within the allowed region $\mathcal X_t$. Mathematically, we need to derive $\theta_t$ for $\tilde x_t := x_t+\theta_t$ such that the goal is satisfied. Here, we show how $\theta_t$ can be seen as the solution to a constrained optimal control problem. We first pose the problem mathematically, and then introduce a relaxation that allows for an efficient online computation of the intervention $\theta_t$. 

\subsubsection*{Optimal Control Setup}
The optimal controller aims to keep latent trajectories out of the unsafe region without compromising output quality. That is, we perform constrained optimization where latent trajectories maximally approximate the original ones (proxying text quality) while avoiding the unsafe region as defined by the probe. This gives rise to the following optimization problem:
\begin{subequations}\label{eqn:global_control}
    \begin{align}
    \underset{\{\theta_{t}\}_{t\in\mathcal T}}{\text{min}} & \qquad \sum_{t\in\mathcal T}\ \|\theta_t\|_2^2 \label{eqn:global_cost}\\
    s.t. 
    &  \qquad \alpha^{\text{min}} \leq \nu (W^\top_t(x_t + \theta_t)) \leq \alpha^{\text{max}}, \quad \forall t \in\mathcal T \label{eqn:constr_probe} \\
    & \qquad x_{t+1} = \ell_t(x_t+\theta_t),\quad \forall t =1, \dots, T \label{eqn:constr_layer} \\
    & \qquad x_0 =  E(s) \label{eqn:constr_init},
    \end{align}
\end{subequations} 
where $s$ is the prompt sequence string. Optimization problem \ref{eqn:global_control} aims to find the minimum $l_2$-norm intervention
$\theta_t$ for $t\in\mathcal T$ (Eq. \ref{eqn:global_cost}) that satisfies the following constraints: Eq.~\ref{eqn:constr_probe} requires the modified activation $x_t+\theta_t$ be classified as disallowed by the probe $f_t$; Eq.~\ref{eqn:constr_layer} captures LLM dynamics, i.e., layer $t$ maps the modified activation $x_t, \theta_t$ to the next latent state $x_{t+1}$; Eq.~\ref{eqn:constr_init} states that the LLM's input embeds the input context, so that interventions are \emph{context-dependent}. The intervention that solves optimization problem \ref{eqn:global_control} is \emph{guaranteed by construction} to keep intervened activations $\tilde x_t$ $\forall t\in\mathcal T$ in the allowed region. 

Whether attribute control is expressed as a cost or a constraint depends on the use case. Other approaches, in contrast to ours, encode attribute control in the optimization objective, but not via hard constraints \citep{dathathri2019plug,hernandez2023inspecting}. LiSeCo's constrained optimization framework also permits this interpretation by relaxation of constraints; though we leave its testing to future work, we state its equivalent problem and prove its \emph{closed-form optimal solution}, which has only been empirically approximated by hyperparameter search in the literature \citep{li2023inferencetime}, in \Cref{app:ablation}. 

\subsubsection*{Optimal Controller Computation}

Optimization problem \ref{eqn:global_control} is a standard problem in the optimal control literature, where by Bellman's Optimality Principle, the standard approach to solving it is dynamic programming \citep{kirk2004optimal}. That is, the optimal solution is computed for the last layer $T$, then via backward induction for $T-1,\dots,1$. But, layer dynamics \ref{eqn:constr_layer} are highly non-convex, and solutions incomputable in closed form, hence their optimality is not guaranteed. Further, dynamic programming requires gradient backpropagation at each LLM forward pass, adding significant inference latency.

To overcome these limitations, we relax problem \ref{eqn:global_control}. No longer searching for a globally optimal solution across layers, we now search for locally optimal solutions at each layer. Now, Eqs.~\ref{eqn:constr_layer} and \ref{eqn:constr_init} cease to play a role, as each layer is optimized for separately. Then, problem \ref{eqn:global_control} is relaxed such that {at each controlled layer $t \in\mathcal T$, the intervention $\theta_t$ is defined as the solution to the following optimization problem:}
\begin{subequations}\label{eqn:relaxed_control_continuous_range}
    \begin{align}
    \theta_t^* = \underset{\theta_t\in\mathbb R^d}{\text{argmin}} & \qquad \|\theta_t\|_2^2 \label{eqn:global_cost_relx_cont_range}\\
    s.t. 
    &  \qquad \alpha^{\text{min}} \leq \nu (W^\top_t(x_t + \theta_t)) \leq \alpha^{\text{max}}
    \label{eqn:constr_probe_relx_range} 
    \end{align}
\end{subequations} 
The sequence of $\theta_t$ that solve problem \ref{eqn:relaxed_control_continuous_range} may not optimize the original formulation \ref{eqn:global_control}. However, one is not anyway guaranteed to find global optima due to the high nonconvexity of layer computations. Furthermore, optimality is not essential as the cost aims only to preserve similarity with the original model. Meanwhile, the guarantee to avoid unsafe region $\mathcal X_t$ is still enforced via Eq.~\ref{eqn:constr_probe_relx_range}. 

A key advantage of relaxed formulation \ref{eqn:relaxed_control_continuous_range} is that it is solvable in closed-form, per-layer, with minimal computational overhead. The following theorem states the analytical solution for optimal $\theta_t$.

\begin{theorem}[Optimal $\theta$] \label{thm:opt_theta_continuous_range} 
    The optimal solution $\theta_t^*\in\mathbb R^d$ to the optimization problem \ref{eqn:relaxed_control_continuous_range} is given by Table \ref{table:theta_t_star}.

  \begin{table}[h!]
    \centering
    \renewcommand{\arraystretch}{2} 
    \setlength{\tabcolsep}{15pt} 
    \begin{tabular}{|c||c|c|c|}
        \hline
        \textbf{Condition} & $\nu (W_t^\top x_t) > \alpha^{\text{max}}$ & $\nu (W_t^\top x_t) < \alpha^{\text{min}}$ & \text{otherwise} \\ 
        \hline\hline
        $\boldsymbol{\theta_t^*}$ & $\displaystyle\frac{\nu^{-1}(\alpha^{\text{max}}) - W_t^\top x_t}{\|W_t\|_2^2}W_t$ & $\displaystyle\frac{\nu^{-1}(\alpha^{\text{min}}) - W_t^\top x_t}{\|W_t\|_2^2}W_t$ & $0$ \\
        \hline
    \end{tabular}
    \caption{Optimal value of intervention $\theta_t^*$ at layer $t$.}
    \label{table:theta_t_star}
\end{table}
\end{theorem}
\begin{proof}
Proof relies on leveraging the KKT conditions. See Appendix \ref{app:opt_theta} for details. 
\end{proof}

Geometrically, the optimal solution is the vector from $x_t$ to the closest point in $\mathcal X_t$. When $x_t \in \mathcal X_t$ already,
no update is needed; hence $\theta_t^* = 0$. Otherwise, the update is a factor of $W_t$. We note that the value of the \emph{control} intervention $\theta_t^*$ depends on the current latent state $x_t$, and its magnitude and direction are explicitly dependent on $x_t$. This is in contrast to many steering methods, where the activations are often over- and under-steered towards a constant direction with a constant magnitude \citep{turner2023activation,li2023inferencetime,rodriguez2024controlling}. Moreover, since $\theta_t^*$ exists in closed-form, computing an intervention incurs negligible computational cost. Crucially, it is guaranteed to keep the latent state outside the disallowed region. Algorithm \ref{alg:online} below summarizes the inference-time usage of LiSeCo. 

\begin{algorithm}[h] 
\caption{LiSeCo: inference-time deployment (online)}\label{alg:online}
\begin{algorithmic}[1]
\State \textbf{Input:} Prompt $s$, control layers $\mathcal{T}$, parameters $W_t$, $\nu$, $\alpha^{\text{min}}$, $\alpha^{\text{max}}$
\State \textbf{Output:} Generated token $\tau$
\State $x_0 \gets E(s)$ 
\For{$t \in [1,\dots,T]$}
    \State Compute activation from Eq.~\ref{eqn:dynamics}: $x_t \gets \ell_t(x_{t-1})$
    \If{$t\in\mathcal T$}
    \State Compute score from Eq.~\ref{eqn:score}: $\alpha \gets \nu(W_t^\intercal x_t)$
    \State Solve for $\theta_t$ (from Table~\ref{table:theta_t_star}) using $W_t$, $\alpha^{\text{min}}$, $\alpha^{\text{max}}$:
    \If{$\alpha > \alpha^{\text{max}}$}
        \State $\theta_t \gets \frac{\nu^{-1}(\alpha^{\text{max}}) - W_t^\top x_t}{\|W_t\|_2^2}W_t$
    \ElsIf{$\alpha < \alpha^{\text{min}}$}
        \State $\theta_t \gets \frac{\nu^{-1}(\alpha^{\text{min}}) - W_t^\top x_t}{\|W_t\|_2^2}W_t$
    \Else
        \State $\theta_t \gets 0$
    \EndIf
    \State Compute modified representation: $x_t \gets x_t + \theta_t$
    \EndIf
\EndFor
\State $\tau \gets U(x_T)$
\end{algorithmic}
\end{algorithm}

\paragraph{Example: Logistic regression as a case of \Cref{thm:opt_theta_continuous_range}} One can generally adapt \Cref{thm:opt_theta_continuous_range} to any invertible nonlinearity $\nu$ and bounds $\alpha^{\max}, \alpha^{\min}$ according to the specific use case. One such special case is \emph{logistic regression}, where the nonlinearity $\nu$ is the sigmoid $\sigma$ and we set the likelihood of an attribute above or below a threshold $p \in [0,1]$. Then, optimization problem \ref{eqn:relaxed_control_continuous_range} can be realized by
\begin{subequations}\label{eqn:relaxed_control}
    \begin{align}
    \underset{\theta_t}{\text{min}} & \qquad \|\theta_t\|_2^2 \label{eqn:global_cost_relx}\\
    s.t. 
    &  \qquad \sigma(W^\top_t(x_t + \theta_t)) - p \leq 0, \label{eqn:constr_probe_relx}
    \end{align}
\end{subequations} 
for each layer $t =1\cdots T$. Optimal $\theta_t$ is then given by the following corollary:
\begin{corollary}[Optimal $\theta$, threshold] \label{thm:opt_theta} 
    The optimal solution $\theta_t^*\in\mathbb R^d$ to the optimization problem \ref{eqn:relaxed_control} is given by 
    \begin{equation}\label{eqn:corollary}
        \theta_t^* = \frac{\ln\frac{p}{1-p}-W_t^\top x_t}{\|W_t\|_2^2}W_t
    \end{equation} if $\sigma (W_t^\top x_t) > p,$ and $\theta_t^* = 0$ otherwise.
\end{corollary}

The corollary follows from simply substituting the nonlinearity $\nu$ and bounds $\alpha^{\text{min}}$ and $\alpha^{\text{max}}$ in \Cref{thm:opt_theta_continuous_range};  \Cref{thm:opt_theta_continuous_range} can be adapted to arbitrary invertible nonlinearities and bounds according to the user's need.

\begin{remark}
    The degenerate case where $\alpha^{\text{min}}=\alpha^{\text{max}}=:p$ corresponds to setting an attribute to a specific value. Although the theory permits this, in practice it is impossible to guarantee that the scores of the generations will be equal to $p$ due to uncertainty introduced by the classifier or numerical errors. Further robustness analysis to ensure that score is within a ball around $p$ is left for future work.
\end{remark}

Finally, although control occurs \emph{locally} at each layer, the local control steps compose to modify the distribution over the next token. To see this, consider a single token generation. Each sequential control action at layer $t$ guarantees that latent state $\tilde x_t \in \mathcal X_t$ is classified as ``allowed", or equivalently, eliminates the set of disallowed trajectories. By the time we reach the last layer $T$, the latent trajectory is guaranteed to have been rated as ``allowed" at every preceding intervened layer. Then, we hypothesize that as a result of the control in the activations, the LLM's output is guided towards scoring in the allowable range $\mathcal A^*$. This hypothesis is empirically verified in the following experimental sections.

\vspace{2mm}

\section{Experimental Methods}
\label{gen_inst}
We tested LiSeCo on {three separate steering tasks: \textbf{toxicity}, \textbf{sentiment}, and \textbf{output language (English $\to$ Spanish)}.} 

\paragraph{Models} We used three state-of-the-art causal language models: Llama-3-8B \citep{llama3}, Gemma-2-2b \citep{gemmateam2024gemma2improvingopen}, and Mistral-7B \citep{jiang2023mistral}. While the architectural details of a layer (attention $+$ MLP) differ slightly between models, our intervention treats layers as black boxes and operates at the level of the \emph{residual stream} \citep{elhage2021mathematical}. This permits our intervention to be applied as a lightweight layer wrapper, in an architecture-agnostic way.

\paragraph{Attribute scoring functions}
Recall that LiSeCo is trained with respect to a scoring function $a: \Sigma^* \to \mathbb R$ that the practitioner has access to. Therefore, in evaluating whether guarantees hold, we use $a$ to not only label the training points in the constraint set, but also evaluate the generations at test time. While one can use \emph{any} scoring function $a: \Sigma^* \to \mathbb R$, we use off-the-shelf neural classifiers from Huggingface. In particular, to score \textbf{toxicity}, we choose $a$ to be the RoBERTa-based toxicity scorer \citep{logacheva-etal-2022-paradetox} that maps sentences to a likelihood of being toxic in $[0,1]$. \cite{logacheva-etal-2022-paradetox}'s classifier is trained on binary classification on Kaggle's Jigsaw dataset \citep{jigsaw-toxic-comment-classification-challenge}. To score \textbf{sentiment}, we similarly choose $a$ to be a RoBERTa-based sentiment classifier \citep{camacho-collados-etal-2022-tweetnlp}, trained on annotated Twitter data \citep{barbieri2020tweeteval}, which assigns sentences to the likelihood of being negative in $[0,1]$. To score the \textbf{language} generation task, we used the RoBERTa-based language classifier of \citet{conneau-etal-2020-unsupervised}, which maps text to the likelihood of being one of twenty languages (including our targets English and Spanish).

\subsection{Offline step: Probe calibration}

\paragraph{Probe-training dataset} 
\label{sec:probedataset}
We test our method on fine-grained steering of text toxicity and negativity. Borrowing terminology from \cite{Ashok_Poczos_2024}, we first learn probing classifiers $f$ using a labelled \emph{constraint dataset}. Then, we evaluate text generation on a \emph{task dataset}. 

For \textbf{toxicity}, we use Kaggle's Jigsaw dataset \citep{jigsaw-toxic-comment-classification-challenge} as the constraint dataset. The dataset contains 30k label-balanced natural language comments. Then, we use our toxicity scorer to label all sentences in the constraint set to produce a probe training set of (sentence, score) pairs.

As \textbf{sentiment} datasets tend to be domain-specific (e.g., movie reviews), we combine several datasets to form the constraint dataset ($N=$30k). This consists of +/- label-balanced samples of 7500 datapoints each from IMDb film reviews \citep{maas-EtAl:2011:ACL-HLT2011}, Tweets \citep{barbieri2020tweeteval}, Yelp reviews \citep{NIPS2015_250cf8b5}, and Amazon reviews \citep{hou2024bridging}. For preprocessing details, see \Cref{app:data}. We score all texts using \cite{camacho-collados-etal-2022-tweetnlp} to produce the probe training set of (sentence, score) pairs.

{For \textbf{language generation (English$\to$Spanish)}, we used the FLORES-Plus dataset \citep{nllb-24}. This contains $N=2010$ parallel English and Spanish sentences collected from Wikinews, Wikivoyage, and Wikibooks. Because sentences contain only one language, we do not create labels by scoring the texts ourselves. Instead, we simply take the gold binary label (Spanish or English).}

\paragraph{Probing classifiers} Our theoretical guarantees rely on a key assumption: that at each layer $t$, there indeed exists a $\mathcal R_t$ separable by linear $f_t$ which together capture a semantics of the text being generated. We first verify, using a linear probe, that it is possible to learn the text attribute score from each layer of the LLM. Towards this aim, we split each of the constraint datasets into an 80\% training set and 20\% validation set. Then, for each model, dataset, and layer, we extract the last token hidden representations $x_t \in \mathbb{R}^d$ for each training sequence; we choose the last token embedding to represent the entire sequence, as in causal LLMs, it is the only to attend to the entire input sequence. We then train one binary classifier $f_t$ per-layer to minimize the cross-entropy loss between the probe prediction and ground-truth scorer in $[0,1]$. See \Cref{app:probes} for implementation details.

\subsection{Online step: Text generation}
For each LLM, we insert trained probes $f_t$ at each layer to evaluate layer-wise toxicity likelihood at each forward pass. If layer $t$'s representation $x_t$ is evaluated toxic, then the control input $\theta_t$ is dynamically applied. We fix text generation for all methods to max 100 new tokens with top-$p=0.3$ sampling, a temperature of $1.0$ and repetition penalty of $1.2$, the same as in published baselines \citep{rodriguez2024controlling,li2023inferencetime}. 

\subsubsection{Baselines} To the best of our knowledge, there are no baselines in the literature offering native guarantees. Therefore, we report only on LiSeCo for fine-grained \emph{activation control}, but we compare LiSeCo against existing methods for \emph{attribute reduction} or \emph{induction} on the text generation. For \textbf{toxicity and negativity reduction and English $\to$ Spanish}, we test several baselines: no-control and prompting with instruction-tuned models, as well as two activation steering methods Activation Addition (ActAdd) \citep{turner2023activation} and Linear AcT \citep{rodriguez2024controlling}. 

\paragraph{Instruction-tuned models} All tested models have instruction-tuned variants. During evaluation, we prompt the instruction-tuned model using a template whose instructions are slightly modified from Mistral's system prompt provided in \cite{jiang2023mistral} (see \Cref{app:instruct} for details).

\paragraph{ActAdd} Like LiSeCo, ActAdd steers text generation in activation space \citep{turner2023activation}. For each model, the steering vector is computed as follows: (1) a source and target prompt, e.g., (``hate"$\to$``love"), are each fed through the model and activations collected; (2) for each layer, the steering variable is computed as the difference from source to target activation; (3) at inference time, the steering variable is added to the intermediate representations of the input data. Like LiSeCo, ActAdd is gradient-free at inference-time. But, there are key differences: since steers derive from natural language prompts, ActAdd does not require a supervised learning phase on annotated data as in LiSeCo. For the same reason, the method lacks guarantees. For implementation details, see \Cref{app:actadd}.

\paragraph{AcT} Similar to LiSeCo, AcT also steers text generation in activation space \citep{rodriguez2024controlling}. Using an optimal transport framework, an optimal transport map between two distributions of outputs (toxic and nontoxic) is learned offline at post-training. At inference time, this lightweight map is applied online to the activations being generated. Similar to LiSeCo, it is gradient-free at inference-time. However, there are some fundamental differences. In AcT, steering is done in-distribution and, although it can be tuned with a strength parameter, gives coarse control over how much to shift. Moreover, steering is only one-direction (from toxic to non-toxic) and is not used in a bi-directional manner. Moreover, it lacks guarantees on the effect of the interventions on the controllability of the method.


\subsubsection{Evaluation}
\label{sec:eval}
We evaluate LLM generations on toxicity and sentiment steering. At the same time, we want our intervention to minimally compromise language modeling performance. To do so, we score generations' toxicity and sentiment, as well as proxy their naturalness using sequence perplexities. 

\paragraph{Test set} 
For the inference-time test set, we repurpose the same datasets as the training step. To make the test dataset for each task, we first sampled $N=1000$ sentences from the respective dataset and truncate each to the first $10$ words. For the sentiment and toxicity tasks, the $10$ word prompts themselves already contained negative or toxic content. Therefore, in order to create the test set, we performed a further filtering step: for each LLM and $10$-word prompt, we first sampled a baseline generation, retaining the set of all prompts that are themselves non-toxic (non-negative), for which the baseline model produced a toxic (negative) \emph{continuation}. Finally, we sampled an equal number of non-toxic (non-negative) prompts that produced a non-toxic (non-negative) continuation. For sentiment and toxicity, this resulted in a balanced test set of equal numbers of ``would-be toxic/negative" continuations and ``would-be non-toxic/non-negative" continuations. For the sentiment task, this resulted in $N=500$ balanced prompts per model. For the toxicity task, because models were empirically biased to be non-toxic, we were able to sample fewer samples: $N=186$ (Llama), $N=160$ (Gemma), and $N=180$ (Mistral). For the language task, because baseline models conditioned on an English prompt seldom produce a Spanish continuation and vice versa, we simply subsampled $N=200$ balanced prompts, $100$ English and $100$ Spanish. Lastly, during test time, we collect the (intervened) models' continuations, capped at maximum $100$ new tokens, for evaluation. 

\paragraph{Semantic control} We rate text generation toxicity, sentiment, or {language (English or Spanish)} using the previously described attribute scoring functions. We convert the scorer's ratings into labels, where sequences are labeled toxic (negative, English) if the classifier returns a likelihood higher than $0.5$, and non-toxic (positive, Spanish) otherwise. The trained linear probes also provide attribute likelihoods, which we use to post-hoc validate LiSeCo, but not to evaluate text generation attributes per-se. The probe score returns the likelihood that a sequence is toxic (negative, English) as determined by the probes' learned semantics, and is used to evaluate control in activation space. 



\paragraph{Text naturalness} The applied intervention ideally should not compromise language modeling performance. We quantify performance using the average perplexity (PPL) of generations under a different LLM, Qwen-2.5-3B \citep{bai2023qwen}. We used a different model family to score PPL, given evidence that LLMs are biased towards their own generations \citep{long2024llms}. 

\section{Experimental Results}
\label{sec:results}

We first observe that toxicity and sentiment are approximately linearly represented in latent space \citep{pmlr-v235-park24c}. We then demonstrate that LiSeCo predictably reduces the controlled attribute as a function of $p$ while maintaining text naturalness. Second, we demonstrate that LiSeCo achieves precise control of activations, such that specifying the desired range $[\alpha^{\text{min}}, \alpha^{\text{max}}]$ indeed controls the activations' probe scores to that range. Finally, we show that LiSeCo performs competitively with existing baselines for attribute reduction while achieving the best naturalness, without extensive finetuning nor online inference latency.  

\subsection{Attributes are approximately linearly represented in latent space}

\Cref{fig:linear_probes} shows, for all models, linear probe validation accuracies per-layer, averaged across 5 random seeds. Probes attain high accuracies of $\sim$90\% for toxicity (\Cref{fig:linear_probes} left) and $\sim 80\%$ for sentiment (\Cref{fig:linear_probes} right), confirming the disallowed toxic (negative, English) regions $\mathcal R_t$ are approximately linearly decodable, {as predicted by the \emph{Linear Representation Hypothesis} \cite{park2023the}}. 
\begin{figure}[H]
\includegraphics[width=0.9\textwidth]{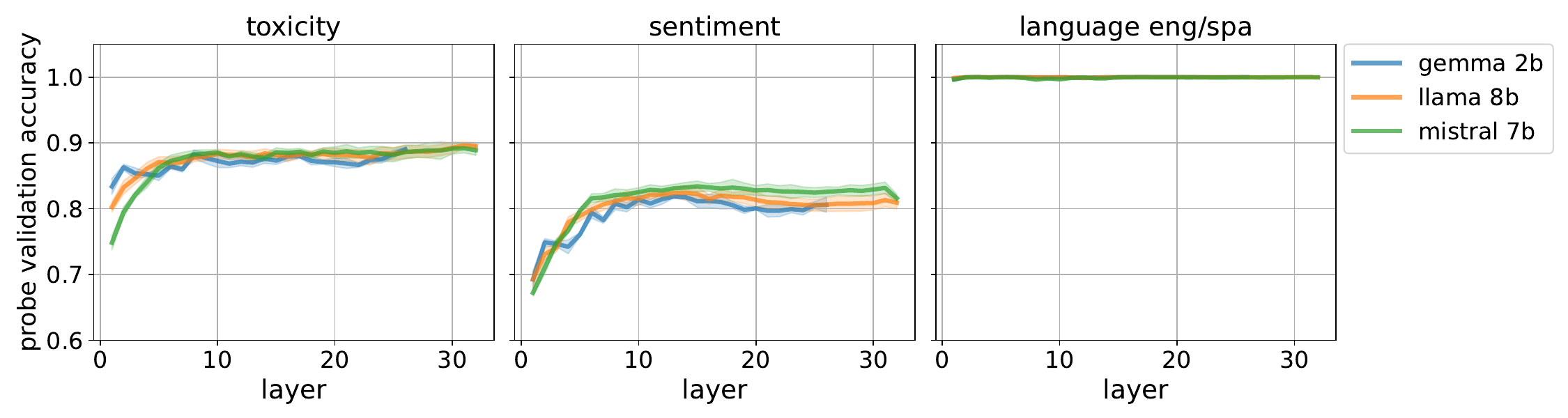}
    \caption{\textbf{Linear probe validation accuracy for toxicity (left), sentiment (middle), and language (right) detection.}
    All curves are shown $\pm$ 1 SD across 5 random seeds. Tasks converge to reasonable accuracies of $\geq75\%$ for most layers of all models, with mid-layers attaining  $\approx90$\% for toxicity, above $80\%$ for sentiment, and nearly $100\%$ for language detection.}
    \label{fig:linear_probes}
\end{figure}

While we use $80\%$ of the constraint set to train the probes ($N$=24k for toxicity and sentiment, {$N$=1.6k for language}), we demonstrate in \Cref{app:probes} that toxicity and sentiment probes can be learned with much fewer samples. Moving forward, toxicity and sentiment results are shown on the original $N=24$k training set, as training each layer only took $2$ minutes on an A30 GPU.

Finally, \Cref{fig:linear_probes}, in line with prior work \citep{rimsky-etal-2024-steering,cheng_emergence_2024,lad2025remarkable}, suggests to control activations starting from \emph{intermediate layers}, as this is where high-level semantic attributes like sentiment are most linearly decodable. We therefore apply LiSeCo on all layers after layer $8$, where probing validation accuracy appears to plateau in \Cref{fig:linear_probes}. {This choice is supported by ablation experiments in \Cref{app:ablation} that show multi-layer steering to outperform single-layer steering, and suggest to avoid steering the first few layers of the model.}

\subsection{LiSeCo achieves control with guarantees in activation space}
LiSeCo controls activations to the correct safe set. To demonstrate this, we ran LiSeCo for various ranges $[\alpha^{\min}, \alpha^{\max}]$ from $0.01 
\pm 0.01$ to $0.99 \pm 0.01$. If LiSeCo truly achieves control in activation space, then we expect the trained probes to score the layer activations, post-intervention, to between $[\alpha^{\min}, \alpha^{\max}]$. 

\Cref{fig:activation_guarantees} demonstrates this in practice. \Cref{fig:activation_guarantees} shows the distribution of the intervened activations' attribute (toxicity, sentiment) scores, scored by the trained probes. Each point is the trained probe's score of a single layer; the bottom row of each plot depicts the activations' score distribution before LiSeCo (brown points), and other rows depict the distribution after LiSeCo. The desired regions of activation space, corresponding to attribute scores $[\alpha^{\min}, \alpha^{\max}]$ computed by the trained linear probes, are shown in green. No matter the LLM or task, LiSeCo systematically controls \emph{activations} (colored points) to the desired range.

\begin{figure}[h!]
    \centering
    \includegraphics[width=\linewidth]{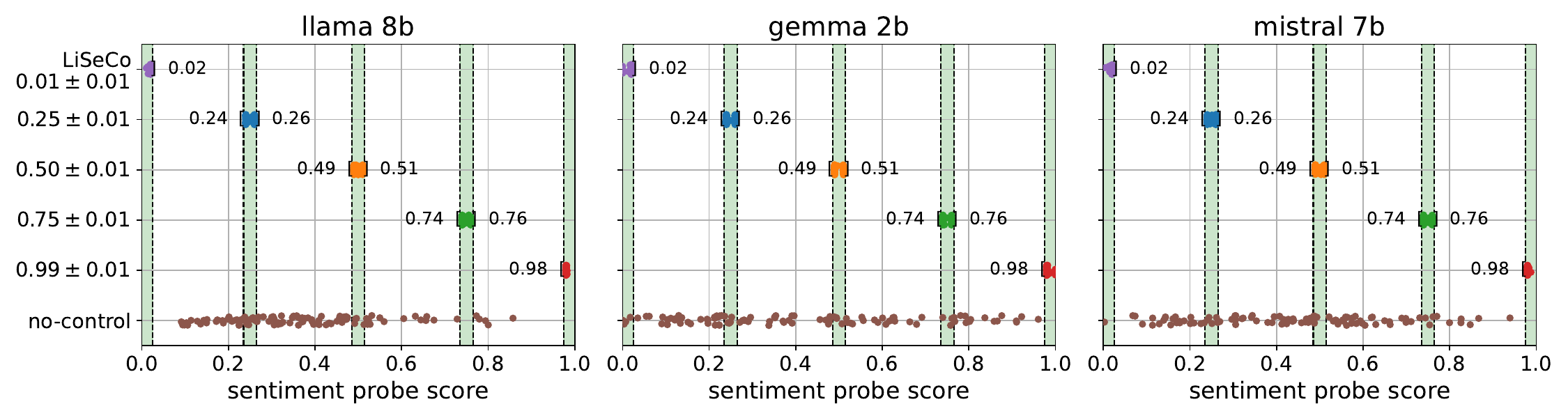} \\
    \includegraphics[width=\linewidth]{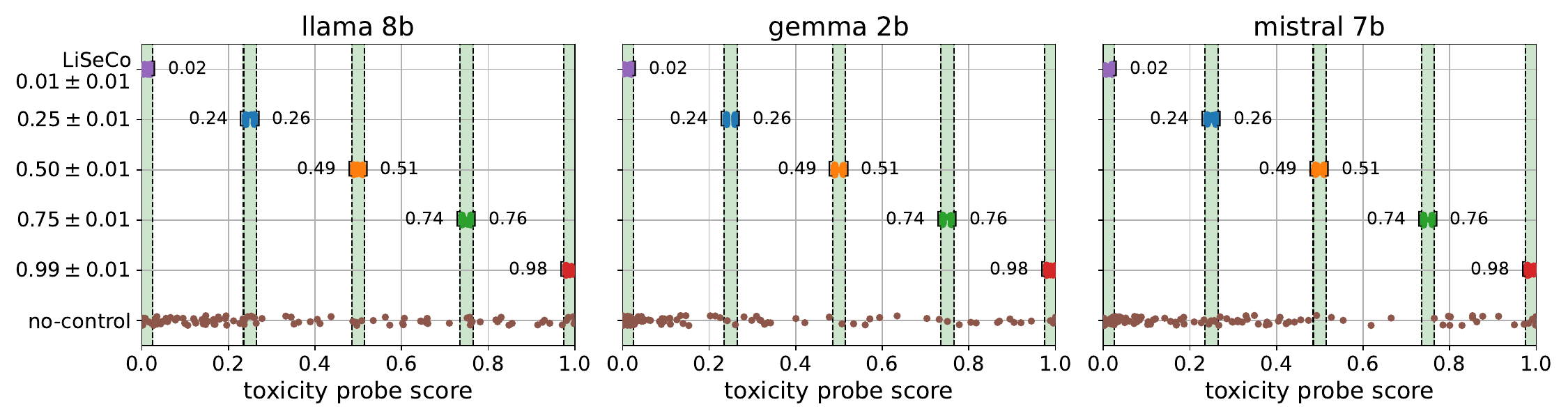} \\
    \includegraphics[width=\linewidth]{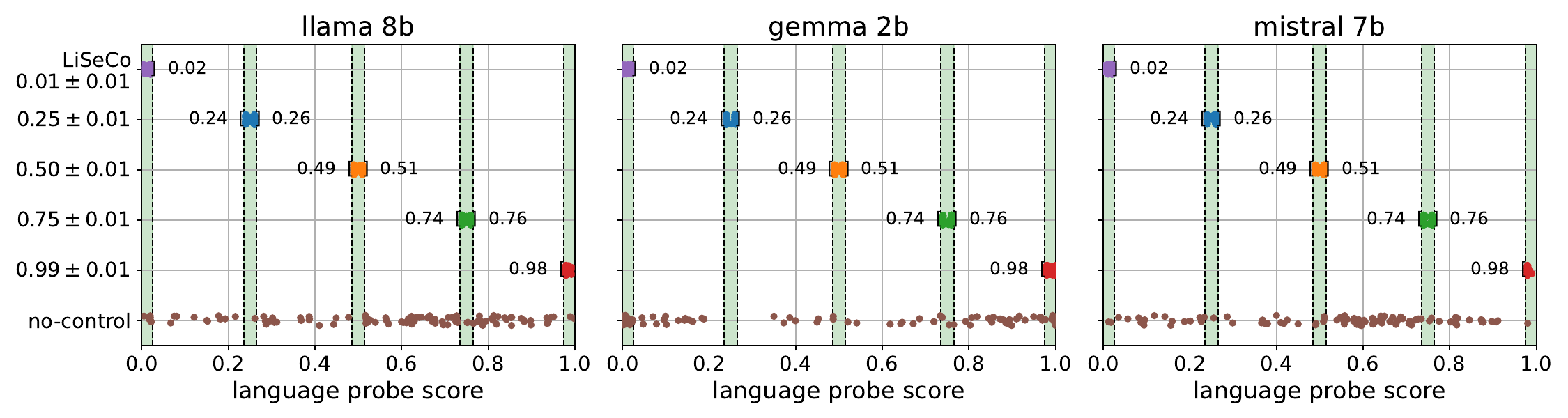}
    \caption{\textbf{LiSeCo controls attributes in activation space.} Attribute probe scores for LiSeCo are shown for sentiment (top), toxicity (middle), and {language (bottom)} on the models Llama, Gemma, and Mistral (left to right). The y-axis of each plot shows the desired range $[\alpha^{\min}, \alpha^{\max}]$, also shaded in green on the plot. The colored points are the actual distribution of the attribute as measured by the trained probes, after applying LiSeCo. A no-control baseline is shown on the bottom row, indicating the distribution if LiSeCo is not applied. In all cases, LiSeCo successfully controls activations to the desired range, seen by the colored dots falling within the green intervals. 
    }
    \label{fig:activation_guarantees}
\end{figure}



\subsection{Control in activation space translates to reliable steering in output space}
Here, we study how control in activation space leads to reliable steering in output space. Specifically, we show that LiSeCo outperforms existing baselines on attribute steering and text naturalness. We show that controlling activations reliably steers the output attribute, where the dependence between LiSeCo $\alpha$ and the output attribute is empirically monotonic, but not identity. Finally, we discuss a path forward for guarantees in activation space to translate to guarantees on the output.

\paragraph{LiSeCo is competitive with baselines for steering and text quality} Recall that the test set consists 50\% generations for which no-control produced a toxic (negative, English) continuation, and 50\% for which it produced a nontoxic (nonnegative, Spanish) continuation. We first consider would-be toxic (negative/English) generations. 
\Cref{fig:ellipses} shows, for sentiment (top) and toxicity (bottom), the \emph{safety-naturalness plane}, where the output text attribute is plotted against its perplexity. Each baseline's (safety, naturalness) distribution is shown as an ellipse centered at the mean, shown with one standard deviation. Only the best hyperparameter settings for each baseline are shown (LiSeCo $[\alpha^{\min}, \alpha^{\max}]=[0, 0.005]$. 

LiSeCo reduces the desired attribute without sacrificing text naturalness, seen by blue ellipses (LiSeCo) being vertically aligned with the red ones (baseline). {For English to Spanish steering (bottom), LiSeCo outperforms baselines by achieving near-perfect performance, while maintaining the lowest perplexity. Interestingly, for English to Spanish steering, where it is simple to tell when the model switches language (as opposed to toxicity/sentiment), the switch tended to be immediate after the prompt in Llama and Mistral, but required several token generations in the case of Gemma, see \Cref{app:naturalness}.} In the case of negativity reduction (top row), LiSeCo is competitive with AcT \citep{rodriguez2024controlling}, outperforming all other baselines for both attribute steering and text naturalness. LiSeCo's minimal effect on text naturalness is baked into its design, as it introduces the \emph{minimum norm} intervention when the activation falls into the unsafe region, and does not intervene if the activation is already classified safe. The design choice of minimal intervention, by contrast, is not a part of other baselines, where the intervention is always applied \citep{turner2023activation}. To that end, we show in \Cref{fig:already-safe} that LiSeCo \emph{abstains} from intervening if the generation is already classified safe, which maintains a comparable perplexity to the baseline. In practice, LiSeCo preserves the original safety and naturalness distribution as desired, while other methods may negatively impact either factor (see \Cref{app:already_safe}).

\begin{figure}[t]
    \centering
    \includegraphics[width=0.85\linewidth]{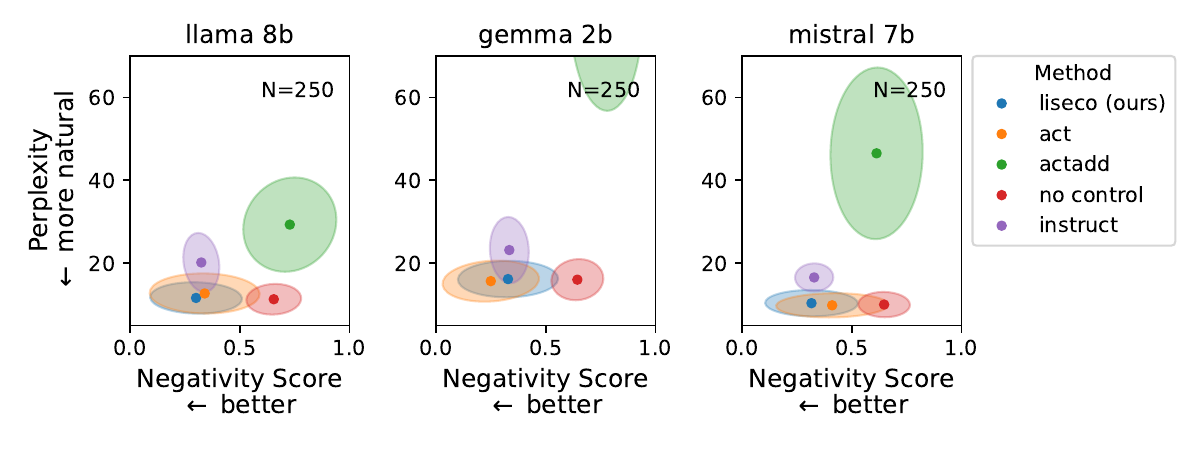} \\ \vspace{-2ex}
    \includegraphics[width=0.85\linewidth]{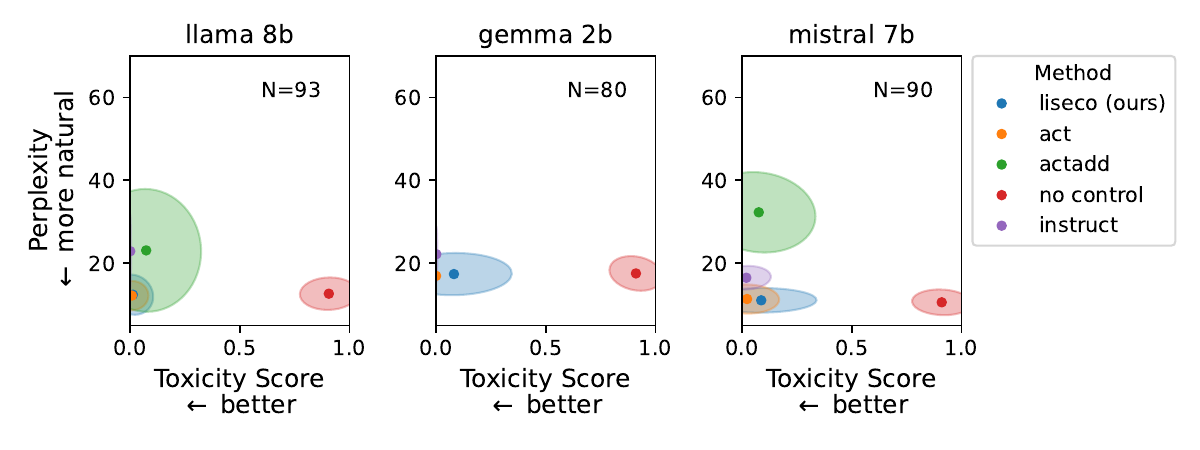}  \\
    \vspace{-2ex}
    \includegraphics[width=0.85\linewidth]{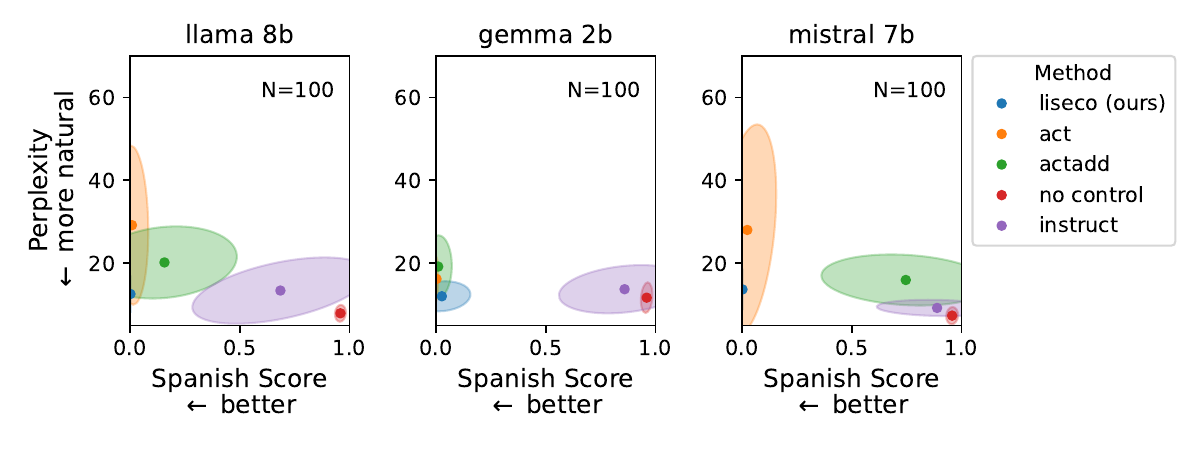}
    \caption{Generations are plotted on the \emph{safety-naturalness plane} (bottom left is best). Each ellipse represents the mean ($\cdot$) with one standard deviation of a single method; only the best hyperparameter setting is shown for each method. The shaded red region represents the region of the safety-naturalness plane that would be classified negative or toxic by the neural scorer. In general across models and tasks, \textbf{LiSeCo performs competitively to baselines while consistently demonstrating high naturalness}. In the negativity reduction task (top row), LiSeCo also demonstrates the best negativity reduction.}
    \label{fig:ellipses}
\end{figure}

\paragraph{Control in activation space permits reliable steering in output space}

We have shown that LiSeCo achieves control in activation space. But, do guarantees in activation space translate to guarantees in generation space? Recall that a key assumption of LiSeCo is that there exists a map $f_t: \mathbb R^d \to \mathcal A$, where $f_t$ is a linear probe. In order for control in activation space to translate to control in attribute space, \Cref{lemma:activation} in \Cref{app:identification} states that probe $f_t$ needs to apply to \emph{every reachable point} in $\mathbb R^d$. Moreover, practically, $f_t$ would need to be perfectly learned from data. These strong assumptions that rarely holds in practice{---we explicitly test it on out-of-distribution data in \Cref{sec:ood}}, finding mixed results. Fortunately, empirically {on in-distribution data}, control in activation space does buy us reliable \emph{steering} in output space. 

\begin{figure}[t]
    \centering
    \begin{tabular}{cc}
        \includegraphics[width=0.7\linewidth,valign=B]{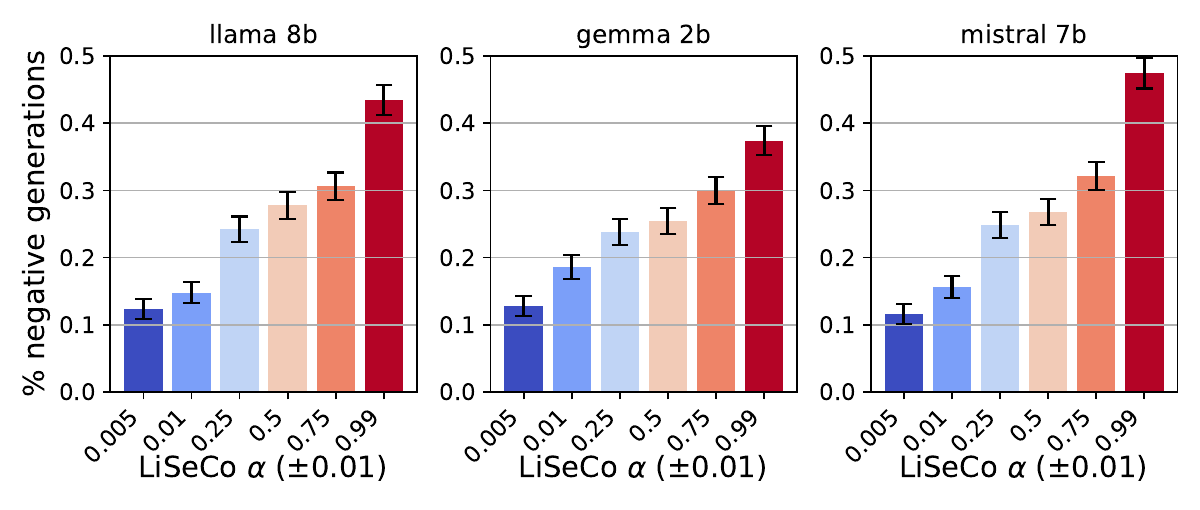} & 
        \includegraphics[width=0.25\linewidth,valign=B]{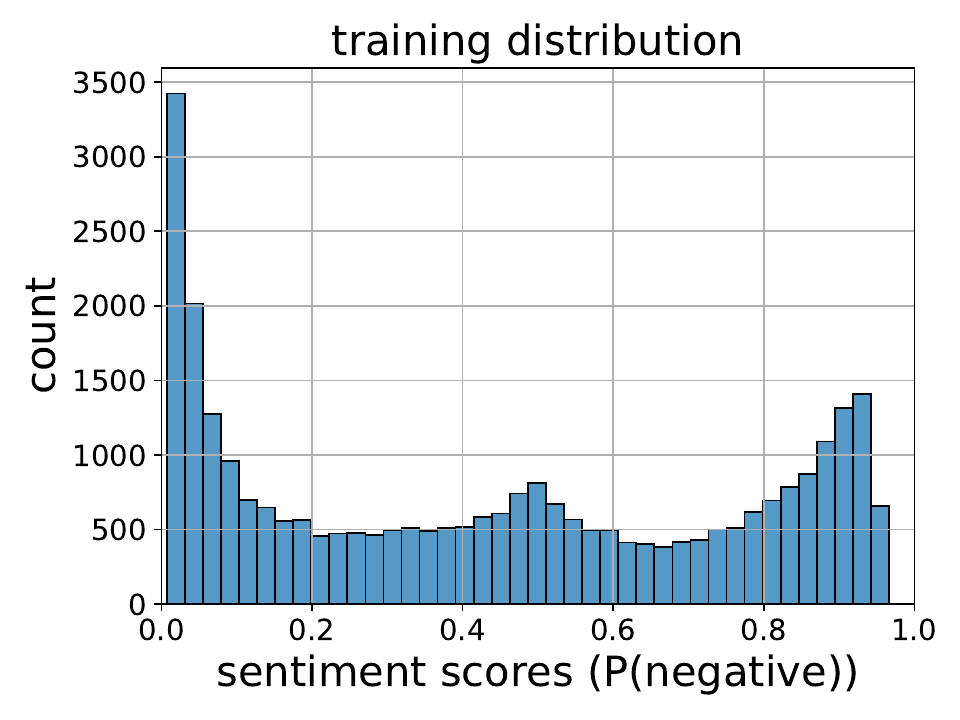} \\
        \includegraphics[width=0.7\linewidth,valign=c]{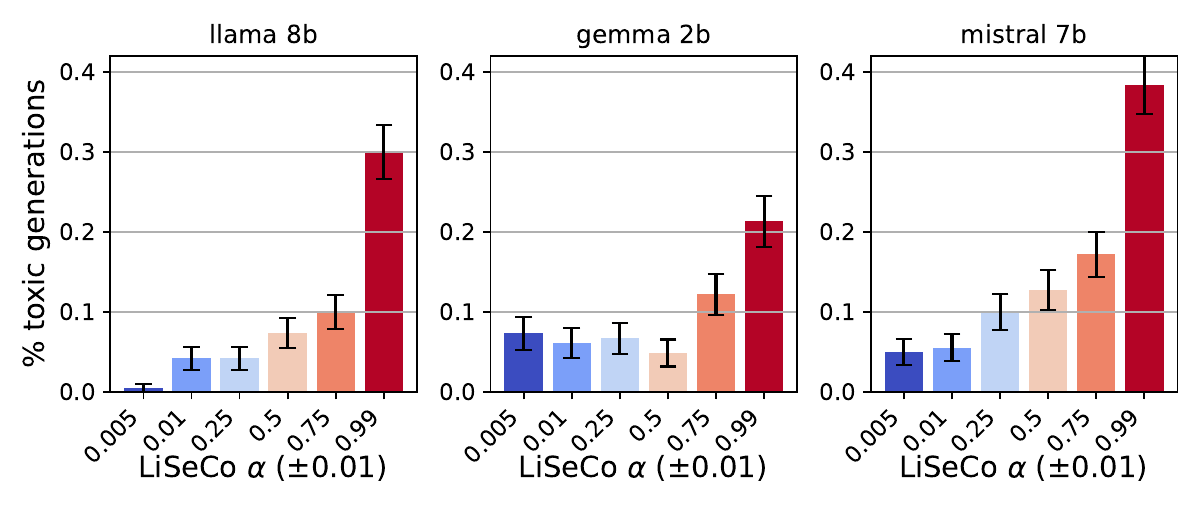} & 
        \includegraphics[width=0.25\linewidth,valign=c]{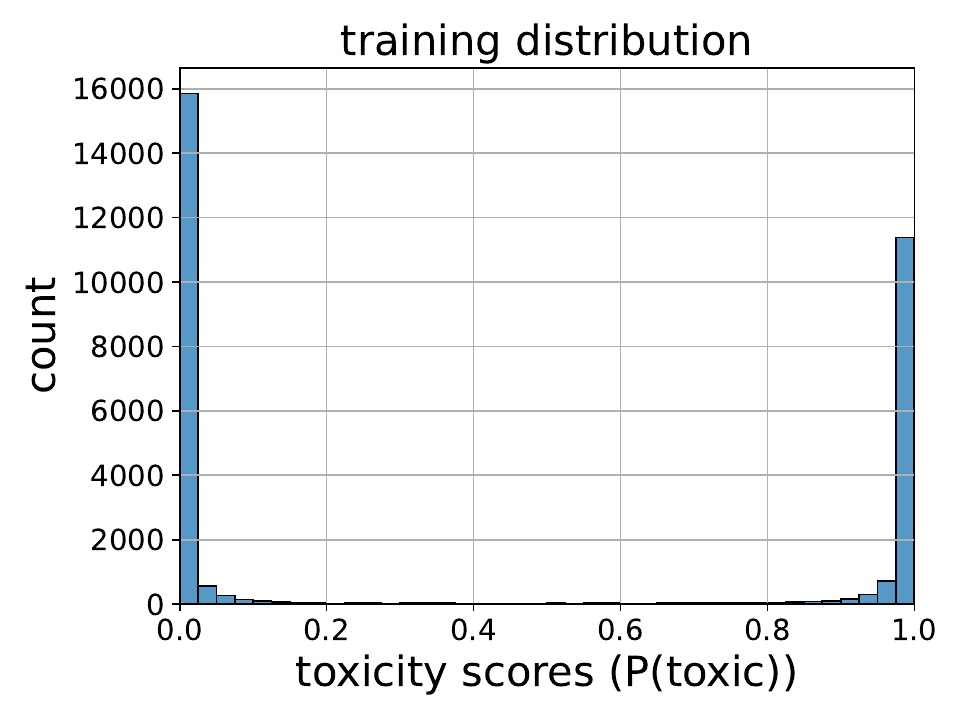} \\
        \includegraphics[width=0.7\linewidth,valign=t]{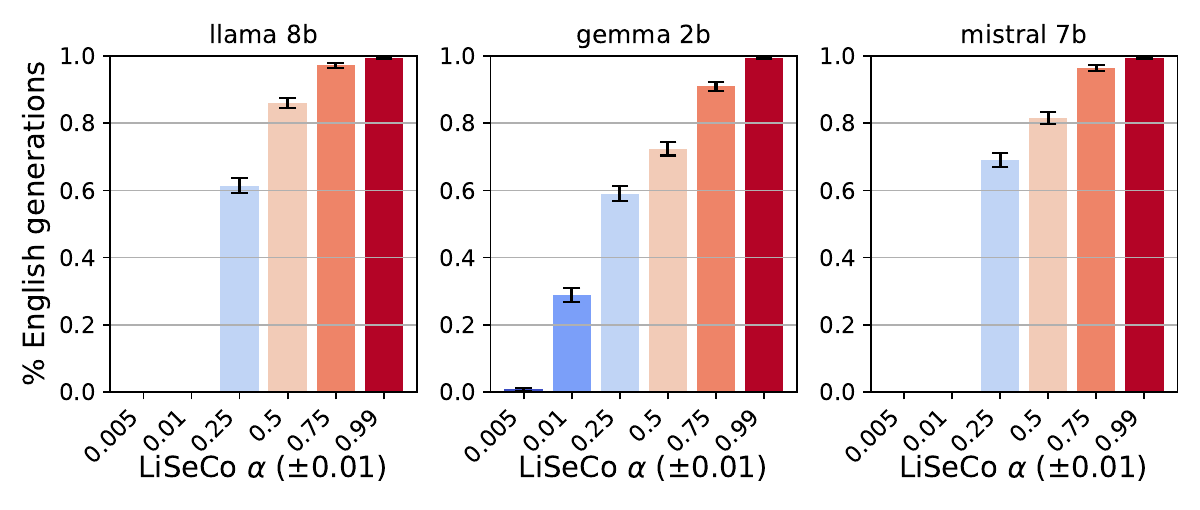} & 
        \includegraphics[width=0.25\linewidth,valign=t]{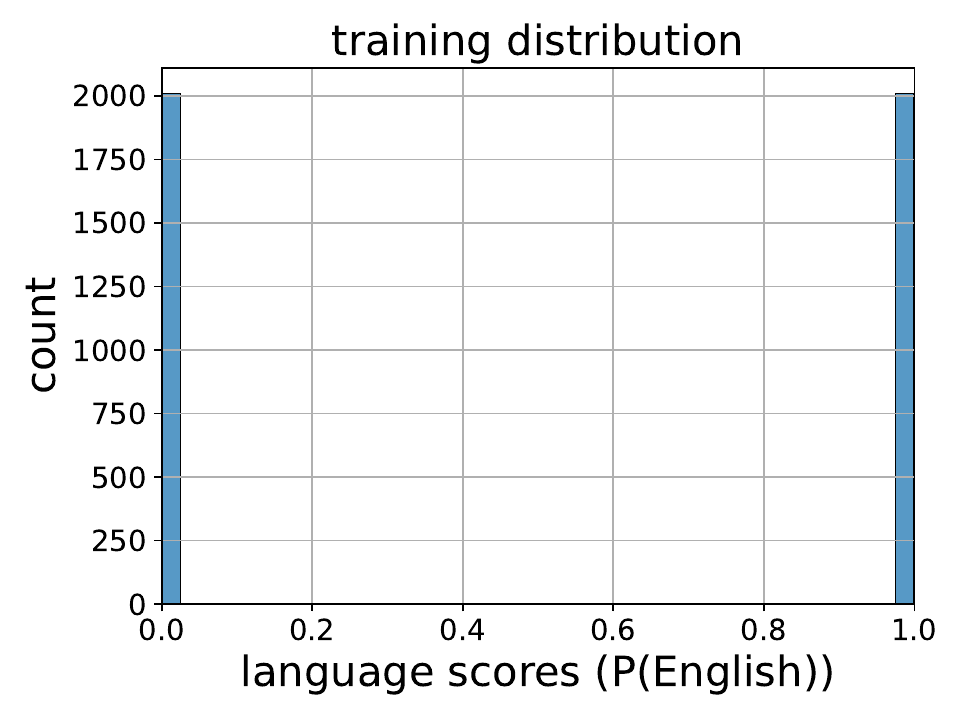} \\
    \end{tabular}
    \caption{\textbf{(Left)} LiSeCo $\alpha$ steers output sentiment (top), toxicity (middle), and language (bottom). We show, for a range of $[\alpha^{\min}, \alpha^{\max}] = \alpha \pm 0.01$ (x-axis), the true proportion of toxic/negative generations (y-axis) with one standard error. For all settings, there is a \textbf{clear monotonic trend where smaller LiSeCo $\alpha$ translates to fewer unsafe generations}. Intervention with LiSeCo $\alpha \approx 0.01$, $\alpha \approx 0.99$ significantly changes the distribution of unsafe generations, seen by non-overlapping error bars between gray and dark blue, dark red, respectively. \textbf{(Right)} The training set y-label distributions for sentiment scores (top), toxicity scores (middle), and language labels (bottom). Sentiment (top) achieved steerability to a wider range than toxicity (middle), though English$\to$Spanish achieved the entire range of values between 0 and 1. 
    }
    \label{fig:steering}
\end{figure}

\Cref{fig:steering} shows a clear monotonic trend between LiSeCo $\alpha$ and the number of unsafe generations. For all models and tasks, smaller LiSeCo $\alpha$ (x-axis) predictably decreases the proportion of toxic/negative generations (y-axis). This allows LiSeCo $\alpha$ to act as an interpretable knob that one can adjust at inference-time to obtain the desired effect. Interestingly, while LiSeCo effectively increases or decreases the target attribute, {except for English to Spanish steering}, even extreme values of $\alpha$ do not result in $100$\% attribute reduction, a known pitfall of existing activation steering methods \citep{rodriguez2024controlling,turner2023activation}. This suggests a shortcoming of the Linear Representation Hypothesis \citep{park2023the}, where linear decodability does not necessarily translate to exact linear controllability for all tasks. 

How well LiSeCo covered attribute space depended highly on the task. In particular, when varying the free parameter $\alpha$, LiSeCo was able to steer language to the full extent of values (bottom row of \Cref{fig:steering}), and sentiment to a wider range than toxicity (\Cref{fig:steering} top vs.~middle). 
These differences are not explained by the presence of binary vs.~continuous labels; both language and toxicity, the best and worst performing task, have binary or near-binary labels, while sentiment labels continuously varied over $[0,1]$ (\Cref{fig:steering} right). 

\subsection{LiSeCo's out-of-distribution generalization (English$\to$Spanish case study)}
\label{sec:ood}
{Recall that LiSeCo's linear probes are calibrated on a training set of annotated data. A key assumption for LiSeCo to work well at inference time is that the training distribution reasonably matches the inference distribution. Experiments until now have tested LiSeCo's performance \emph{in-distribution}. In this section, we conduct a case study testing LiSeCo (English$\to$Spanish, on the best in-distribution hyperparameters $\alpha^{\text{min}}=0, \alpha^{\text{max}}=0.005
$) on three out-of-distribution English datasets: poetry from the Poetry Foundation dataset \citep{suayptalha/Poetry-Foundation-Poems}, the EuroParl dataset of European parliamentary proceedings \citep{koehn-2005-europarl}, and code snippets from Stanford's CodeAlpaca \citep{codealpaca}. For each dataset, we randomly sampled $N=100$ sequences for inference.}

\begin{table}[]
    \centering
\begin{tabular}{lccc}
\toprule
 & llama 8b & gemma 2b & mistral 7b \\
\midrule
in-distribution & 0.987$_{\textcolor{gray}{\scriptsize 0.0002}}$ & 0.913$_{\textcolor{gray}{\scriptsize  0.0112}}$ & 0.988$_{\textcolor{gray}{\scriptsize  0.0002}}$ \\
poems & 0.987$_{\textcolor{gray}{\scriptsize 0.0016}}$ & 0.816$_{\textcolor{gray}{\scriptsize 0.0349}}$ & 0.989$_{\textcolor{gray}{\scriptsize 0.0003}}$ \\
europarl & 0.986$_{\textcolor{gray}{\scriptsize  0.0007}}$ & 0.607$_{\textcolor{gray}{\scriptsize  0.0461}}$ & 0.986$_{\textcolor{gray}{\scriptsize 0.0005}}$ \\
code & 0.889$_{\textcolor{gray}{\scriptsize  0.0282}}$ & 0.287$_{\textcolor{gray}{\scriptsize  0.0445}}$ & 0.931$_{\textcolor{gray}{\scriptsize  0.0220}}$ \\
\bottomrule
\end{tabular}
    \caption{\textbf{English $\to$ Spanish OOD Results, Average Prob(Spanish) $\uparrow$.} {For each dataset (rows) and model (columns), we show the average rating $\pm$SE of the probability the generation is Spanish by the automatic classifier \citep{conneau-etal-2020-unsupervised}. The three bottom rows, corresponding to a dataset of poems, European parliament proceedings, and code, are out-of-distribution, while the in-distribution performance is shown for reference at the top. How well the probes transfer OOD depends on the model and dataset, with performance being generally retained for Llama and Mistral, but not for Gemma.}}
    \label{tab:ood-results}
\end{table}

{\Cref{tab:ood-results} shows the OOD performance of each model (columns) on each dataset (rows), with the in-distribution performance for reference at the top. In the Table, each value is the average probability of a generation being Spanish as evaluated by the automatic classifier \citep{conneau-etal-2020-unsupervised}, along with one standard error. How well probes transfer to the OOD datasets of poems, parliament proceedings (EuroParl), and code depends on the model; Llama and Mistral maintain very high English $\to$ Spanish steering performance, seen by values very close to $1.0$, while Gemma's performance degrades out-of-distribution. For Gemma, a frequent error mode was to produce text in Romance languages that were not Spanish, for instance French or Portuguese. Qualitatively, text naturalness, semantics, and style were not compromised when applying LiSeCo out-of-distribution, see \Cref{tab:ood_examples} in \Cref{app:naturalness} for examples. Interestingly, even for code, where programming language syntax is commonly English, LiSeCo steers the models to comment the code in Spanish. Overall, although LiSeCo's theoretical guarantees hold \emph{in-distribution}, these results highlight OOD generalization as an important direction for future work.}

\section{Discussion} \label{discussion}
We have proposed LiSeCo, a controlled language generation method that is theoretically guaranteed to stay within permitted regions of latent space. Empirically, the method produces steered but natural text. LiSeCo is compatible with any layered deep learning architecture (not limited to Transformers), as it is agnostic to the layer computation (dynamics) and involves a negligible inference-time latency. In future work, we are interested in {testing metrics beyond perplexity such as benchmark performance} \citep{macocco2026on}, applying our approach to different tasks and joint constraints, as well as to alternatives to linear probes as the way to ascertain whether a token falls into the undesirable region. An important enhancement of the proposed method would be to study conditions under which the control of the activations directly translates to control in token space. Some preliminary theoretical conditions are provided in \Cref{app:identification}, where in order for activation control to translate to attribute control, the linear encoding of the output attribute trained on data must apply uniformly to all regions of activation space $\mathbb R^d$. 

\paragraph{Limitations and Future Work} Using LiSeCo has several caveats: (1) it requires supervised learning of the linear probes on annotated data; (2) the intervention is only as good as the probes, which is only as good as their training data. Thus, when training probes, it is crucial that the training data well-represent the use domain, {i.e., are \emph{in-distribution}. In particular, although we showed that LiSeCo transferred well out-of-distribution for English$\to$Spanish steering, LiSeCo may in general be less effective on out-of-distribution data and show varying performance across models}. We emphasize that this bottleneck is inherent to any steering method that learns from data \citep{rodriguez2024controlling,dathathri2019plug,li2023inferencetime}. {(3) LiSeCo will only work for features that are \emph{linearly represented}. The practitioner needs to verify, via linear probes, that their attribute of interest is indeed linearly decodable with high performance. (4) LiSeCo currently works for \emph{binary} attributes, e.g., toxic/non-toxic, English/Spanish. A natural extension would be to consider multiple classes, e.g., English/Spanish/Chinese. Finally, we qualitatively observed that (5) LiSeCo worked less well when steering prompts that \emph{already} showed an undesirable attribute; for instance, when steering English prompts towards Spanish, several token generations were needed for the steering to take effect. This behavior is well-documented \citep{pmlr-v235-wolf24a}, but future work should determine when and whether LiSeCo is able to steer out of undesirable regions. As such, LiSeCo best functions as a \emph{preemptive}, rather than retroactive, mechanism.}

\newpage

\paragraph{Ethics statement} Controlling text generation {has a dual-use implication, that is,} it can be used for benefit or harm. While we have demonstrated our method on toxicity and negativity avoidance, it can equivalently be applied to increase harmful traits. {The choice of feature and constraint set when designing the linear probes must be done carefully to ensure that it accurately reflects the use-case and is compliant with safety standards and free of harmful biases.}

\paragraph{Reproducibility statement} Code and data can be found at \url{https://github.com/chengemily1/llm-control}. The compute resources used are described in \Cref{app:computing-resources}, and the specific datasets and models used are linked in \Cref{app:assets}. The proof of Theorem 1 is detailed in \Cref{app:opt_theta}.

\paragraph{Acknowledgments} 
The authors would like to thank Marco Baroni for feedback on the early stages of the project; as well as Daniel Morton and Hugo Buurmeijer for their feedback on the code infrastructure.

EC received financial support from the Catalan government (AGAUR grant SGR 2021 00470). CAA received financial support from the ETH AI Center and a Schmidt Science Fellowship. This project has received funding from the European Research Council (ERC) under the European Union’s Horizon 2020 research and innovation programme (grant agreement No. 101019291). This paper reflects the authors’ view only, and the funding agency is not responsible for any use that may be made of the information it contains. 

\bibliography{marco,emily}

@article{wu2024reft,
  title={Reft: Representation finetuning for language models},
  author={Wu, Zhengxuan and Arora, Aryaman and Wang, Zheng and Geiger, Atticus and Jurafsky, Dan and Manning, Christopher D and Potts, Christopher},
  journal={Advances in Neural Information Processing Systems},
  volume={37},
  pages={63908--63962},
  year={2024}
}

@misc{im2026unifiedunderstandingevaluationsteering,
      title={A Unified Understanding and Evaluation of Steering Methods}, 
      author={Shawn Im and Sharon Li},
      year={2026},
      eprint={2502.02716},
      archivePrefix={arXiv},
      primaryClass={cs.LG},
      url={https://arxiv.org/abs/2502.02716}, 
}

@misc{suayptalha/Poetry-Foundation-Poems, url={https://huggingface.co/datasets/suayptalha/Poetry-Foundation-Poems}, year={2025}, month=feb }

@inproceedings{koehn-2005-europarl,
    title = "{E}uroparl: A Parallel Corpus for Statistical Machine Translation",
    author = "Koehn, Philipp",
    booktitle = "Proceedings of Machine Translation Summit X: Papers",
    month = sep # " 13-15",
    year = "2005",
    address = "Phuket, Thailand",
    url = "https://aclanthology.org/2005.mtsummit-papers.11",
    pages = "79--86",
}

@misc{codealpaca,
  author = {Sahil Chaudhary},
  title = {Code Alpaca: An Instruction-following LLaMA model for code generation},
  year = {2023},
  publisher = {GitHub},
  journal = {GitHub repository},
  howpublished = {\url{https://github.com/sahil280114/codealpaca}},
}

@inproceedings{
lad2025remarkable,
title={Remarkable Robustness of {LLM}s: Stages of Inference?},
author={Vedang Lad and Jin Hwa Lee and Wes Gurnee and Max Tegmark},
booktitle={The Thirty-ninth Annual Conference on Neural Information Processing Systems},
year={2025},
url={https://openreview.net/forum?id=Wxh5Xz7NpJ}
}

@misc{
macocco2026on,
title={On The Effectiveness-Fluency Trade-Off In {LLM} Conditioning: A Systematic Study},
author={Iuri Macocco and Pau Rodriguez and Arno Blaas and Luca Zappella and Marco Baroni and Xavier Suau},
year={2026},
url={https://openreview.net/forum?id=JAIymWFRU2}
}

@InProceedings{pmlr-v235-wolf24a,
  title = 	 {Fundamental Limitations of Alignment in Large Language Models},
  author =       {Wolf, Yotam and Wies, Noam and Avnery, Oshri and Levine, Yoav and Shashua, Amnon},
  booktitle = 	 {Proceedings of the 41st International Conference on Machine Learning},
  pages = 	 {53079--53112},
  year = 	 {2024},
  editor = 	 {Salakhutdinov, Ruslan and Kolter, Zico and Heller, Katherine and Weller, Adrian and Oliver, Nuria and Scarlett, Jonathan and Berkenkamp, Felix},
  volume = 	 {235},
  series = 	 {Proceedings of Machine Learning Research},
  month = 	 {21--27 Jul},
  publisher =    {PMLR},
  pdf = 	 {https://raw.githubusercontent.com/mlresearch/v235/main/assets/wolf24a/wolf24a.pdf},
  url = 	 {https://proceedings.mlr.press/v235/wolf24a.html},
  abstract = 	 {An important aspect in developing language models that interact with humans is aligning their behavior to be useful and unharmful for their human users. This is usually achieved by tuning the model in a way that enhances desired behaviors and inhibits undesired ones, a process referred to as alignment. In this paper, we propose a theoretical approach called Behavior Expectation Bounds (BEB) which allows us to formally investigate several inherent characteristics and limitations of alignment in large language models. Importantly, we prove that within the limits of this framework, for any behavior that has a finite probability of being exhibited by the model, there exist prompts that can trigger the model into outputting this behavior, with probability that increases with the length of the prompt. This implies that any alignment process that attenuates an undesired behavior but does not remove it altogether, is not safe against adversarial prompting attacks. Furthermore, our framework hints at the mechanism by which leading alignment approaches such as reinforcement learning from human feedback make the LLM prone to being prompted into the undesired behaviors. This theoretical result is being experimentally demonstrated in large scale by the so called contemporary "chatGPT jailbreaks", where adversarial users trick the LLM into breaking its alignment guardrails by triggering it into acting as a malicious persona. Our results expose fundamental limitations in alignment of LLMs and bring to the forefront the need to devise reliable mechanisms for ensuring AI safety.}
}

@article{nllb-24,
    author="{NLLB Team} and Costa-juss{\`a}, Marta R. and Cross, James and {\c{C}}elebi, Onur and Elbayad, Maha and Heafield, Kenneth and Heffernan, Kevin and Kalbassi, Elahe and Lam, Janice and Licht, Daniel and Maillard, Jean and Sun, Anna and Wang, Skyler and Wenzek, Guillaume and Youngblood, Al and Akula, Bapi and Barrault, Loic and Gonzalez, Gabriel Mejia and Hansanti, Prangthip and Hoffman, John and Jarrett, Semarley and Sadagopan, Kaushik Ram and Rowe, Dirk and Spruit, Shannon and Tran, Chau and Andrews, Pierre and Ayan, Necip Fazil and Bhosale, Shruti and Edunov, Sergey and Fan, Angela and Gao, Cynthia and Goswami, Vedanuj and Guzm{\'a}n, Francisco and Koehn, Philipp and Mourachko, Alexandre and Ropers, Christophe and Saleem, Safiyyah and Schwenk, Holger and Wang, Jeff",
    title="Scaling neural machine translation to 200 languages",
    journal="Nature",
    year="2024",
    volume="630",
    number="8018",
    pages="841--846",
    issn="1476-4687",
    doi="10.1038/s41586-024-07335-x",
    url="https://doi.org/10.1038/s41586-024-07335-x"
}

@inproceedings{conneau-etal-2020-unsupervised,
    title = "Unsupervised Cross-lingual Representation Learning at Scale",
    author = "Conneau, Alexis  and
      Khandelwal, Kartikay  and
      Goyal, Naman  and
      Chaudhary, Vishrav  and
      Wenzek, Guillaume  and
      Guzm{\'a}n, Francisco  and
      Grave, Edouard  and
      Ott, Myle  and
      Zettlemoyer, Luke  and
      Stoyanov, Veselin",
    editor = "Jurafsky, Dan  and
      Chai, Joyce  and
      Schluter, Natalie  and
      Tetreault, Joel",
    booktitle = "Proceedings of the 58th Annual Meeting of the Association for Computational Linguistics",
    month = jul,
    year = "2020",
    address = "Online",
    publisher = "Association for Computational Linguistics",
    url = "https://aclanthology.org/2020.acl-main.747/",
    doi = "10.18653/v1/2020.acl-main.747",
    pages = "8440--8451",
    abstract = "This paper shows that pretraining multilingual language models at scale leads to significant performance gains for a wide range of cross-lingual transfer tasks. We train a Transformer-based masked language model on one hundred languages, using more than two terabytes of filtered CommonCrawl data. Our model, dubbed XLM-R, significantly outperforms multilingual BERT (mBERT) on a variety of cross-lingual benchmarks, including +14.6{\%} average accuracy on XNLI, +13{\%} average F1 score on MLQA, and +2.4{\%} F1 score on NER. XLM-R performs particularly well on low-resource languages, improving 15.7{\%} in XNLI accuracy for Swahili and 11.4{\%} for Urdu over previous XLM models. We also present a detailed empirical analysis of the key factors that are required to achieve these gains, including the trade-offs between (1) positive transfer and capacity dilution and (2) the performance of high and low resource languages at scale. Finally, we show, for the first time, the possibility of multilingual modeling without sacrificing per-language performance; XLM-R is very competitive with strong monolingual models on the GLUE and XNLI benchmarks. We will make our code and models publicly available."
}

@inproceedings{logacheva-etal-2022-paradetox,
    title = "{P}ara{D}etox: Detoxification with Parallel Data",
    author = "Logacheva, Varvara  and
      Dementieva, Daryna  and
      Ustyantsev, Sergey  and
      Moskovskiy, Daniil  and
      Dale, David  and
      Krotova, Irina  and
      Semenov, Nikita  and
      Panchenko, Alexander",
    booktitle = "Proceedings of the 60th Annual Meeting of the Association for Computational Linguistics (Volume 1: Long Papers)",
    month = may,
    year = "2022",
    address = "Dublin, Ireland",
    publisher = "Association for Computational Linguistics",
    url = "https://aclanthology.org/2022.acl-long.469",
    pages = "6804--6818",
    abstract = "We present a novel pipeline for the collection of parallel data for the detoxification task. We collect non-toxic paraphrases for over 10,000 English toxic sentences. We also show that this pipeline can be used to distill a large existing corpus of paraphrases to get toxic-neutral sentence pairs. We release two parallel corpora which can be used for the training of detoxification models. To the best of our knowledge, these are the first parallel datasets for this task.We describe our pipeline in detail to make it fast to set up for a new language or domain, thus contributing to faster and easier development of new parallel resources.We train several detoxification models on the collected data and compare them with several baselines and state-of-the-art unsupervised approaches. We conduct both automatic and manual evaluations. All models trained on parallel data outperform the state-of-the-art unsupervised models by a large margin. This suggests that our novel datasets can boost the performance of detoxification systems.",
}

@inproceedings{rimsky-etal-2024-steering,
    title = "Steering Llama 2 via Contrastive Activation Addition",
    author = "Rimsky, Nina  and
      Gabrieli, Nick  and
      Schulz, Julian  and
      Tong, Meg  and
      Hubinger, Evan  and
      Turner, Alexander",
    editor = "Ku, Lun-Wei  and
      Martins, Andre  and
      Srikumar, Vivek",
    booktitle = "Proceedings of the 62nd Annual Meeting of the Association for Computational Linguistics (Volume 1: Long Papers)",
    month = aug,
    year = "2024",
    address = "Bangkok, Thailand",
    publisher = "Association for Computational Linguistics",
    url = "https://aclanthology.org/2024.acl-long.828/",
    doi = "10.18653/v1/2024.acl-long.828",
    pages = "15504--15522",
    abstract = "We introduce Contrastive Activation Addition (CAA), a method for steering language models by modifying their activations during forward passes. CAA computes ``steering vectors'' by averaging the difference in residual stream activations between pairs of positive and negative examples of a particular behavior, such as factual versus hallucinatory responses. During inference, these steering vectors are added at all token positions after the user{'}s prompt with either a positive or negative coefficient, allowing precise control over the degree of the targeted behavior. We evaluate CAA{'}s effectiveness on Llama 2 Chat using multiple-choice behavioral question datasets and open-ended generation tasks. We demonstrate that CAA significantly alters model behavior, is effective over and on top of traditional methods like finetuning and system prompt design, and minimally reduces capabilities. Moreover, we gain deeper insights into CAA{'}s mechanisms by employing various activation space interpretation methods. CAA accurately steers model outputs and sheds light on how high-level concepts are represented in Large Language Models (LLMs)."
}

@article{dalrymple2024towards,
  title={Towards Guaranteed Safe AI: A Framework for Ensuring Robust and Reliable AI Systems},
  author={Dalrymple, David and Skalse, Joar and Bengio, Yoshua and Russell, Stuart and Tegmark, Max and Seshia, Sanjit and Omohundro, Steve and Szegedy, Christian and Goldhaber, Ben and Ammann, Nora and others},
  journal={CoRR},
  year={2024}
}

@misc{gemmateam2024gemma2improvingopen,
      title={Gemma 2: Improving Open Language Models at a Practical Size}, 
      author={Gemma Team and Morgane Riviere and Shreya Pathak and Pier Giuseppe Sessa and Cassidy Hardin and Surya Bhupatiraju and Léonard Hussenot and Thomas Mesnard and Bobak Shahriari and Alexandre Ramé and Johan Ferret and Peter Liu and Pouya Tafti and Abe Friesen and Michelle Casbon and Sabela Ramos and Ravin Kumar and Charline Le Lan and Sammy Jerome and Anton Tsitsulin and Nino Vieillard and Piotr Stanczyk and Sertan Girgin and Nikola Momchev and Matt Hoffman and Shantanu Thakoor and Jean-Bastien Grill and Behnam Neyshabur and Olivier Bachem and Alanna Walton and Aliaksei Severyn and Alicia Parrish and Aliya Ahmad and Allen Hutchison and Alvin Abdagic and Amanda Carl and Amy Shen and Andy Brock and Andy Coenen and Anthony Laforge and Antonia Paterson and Ben Bastian and Bilal Piot and Bo Wu and Brandon Royal and Charlie Chen and Chintu Kumar and Chris Perry and Chris Welty and Christopher A. Choquette-Choo and Danila Sinopalnikov and David Weinberger and Dimple Vijaykumar and Dominika Rogozińska and Dustin Herbison and Elisa Bandy and Emma Wang and Eric Noland and Erica Moreira and Evan Senter and Evgenii Eltyshev and Francesco Visin and Gabriel Rasskin and Gary Wei and Glenn Cameron and Gus Martins and Hadi Hashemi and Hanna Klimczak-Plucińska and Harleen Batra and Harsh Dhand and Ivan Nardini and Jacinda Mein and Jack Zhou and James Svensson and Jeff Stanway and Jetha Chan and Jin Peng Zhou and Joana Carrasqueira and Joana Iljazi and Jocelyn Becker and Joe Fernandez and Joost van Amersfoort and Josh Gordon and Josh Lipschultz and Josh Newlan and Ju-yeong Ji and Kareem Mohamed and Kartikeya Badola and Kat Black and Katie Millican and Keelin McDonell and Kelvin Nguyen and Kiranbir Sodhia and Kish Greene and Lars Lowe Sjoesund and Lauren Usui and Laurent Sifre and Lena Heuermann and Leticia Lago and Lilly McNealus and Livio Baldini Soares and Logan Kilpatrick and Lucas Dixon and Luciano Martins and Machel Reid and Manvinder Singh and Mark Iverson and Martin Görner and Mat Velloso and Mateo Wirth and Matt Davidow and Matt Miller and Matthew Rahtz and Matthew Watson and Meg Risdal and Mehran Kazemi and Michael Moynihan and Ming Zhang and Minsuk Kahng and Minwoo Park and Mofi Rahman and Mohit Khatwani and Natalie Dao and Nenshad Bardoliwalla and Nesh Devanathan and Neta Dumai and Nilay Chauhan and Oscar Wahltinez and Pankil Botarda and Parker Barnes and Paul Barham and Paul Michel and Pengchong Jin and Petko Georgiev and Phil Culliton and Pradeep Kuppala and Ramona Comanescu and Ramona Merhej and Reena Jana and Reza Ardeshir Rokni and Rishabh Agarwal and Ryan Mullins and Samaneh Saadat and Sara Mc Carthy and Sarah Cogan and Sarah Perrin and Sébastien M. R. Arnold and Sebastian Krause and Shengyang Dai and Shruti Garg and Shruti Sheth and Sue Ronstrom and Susan Chan and Timothy Jordan and Ting Yu and Tom Eccles and Tom Hennigan and Tomas Kocisky and Tulsee Doshi and Vihan Jain and Vikas Yadav and Vilobh Meshram and Vishal Dharmadhikari and Warren Barkley and Wei Wei and Wenming Ye and Woohyun Han and Woosuk Kwon and Xiang Xu and Zhe Shen and Zhitao Gong and Zichuan Wei and Victor Cotruta and Phoebe Kirk and Anand Rao and Minh Giang and Ludovic Peran and Tris Warkentin and Eli Collins and Joelle Barral and Zoubin Ghahramani and Raia Hadsell and D. Sculley and Jeanine Banks and Anca Dragan and Slav Petrov and Oriol Vinyals and Jeff Dean and Demis Hassabis and Koray Kavukcuoglu and Clement Farabet and Elena Buchatskaya and Sebastian Borgeaud and Noah Fiedel and Armand Joulin and Kathleen Kenealy and Robert Dadashi and Alek Andreev},
      year={2024},
      eprint={2408.00118},
      archivePrefix={arXiv},
      primaryClass={cs.CL},
      url={https://arxiv.org/abs/2408.00118}, 
}

@article{bai2023qwen,
  title={Qwen technical report},
  author={Bai, Jinze and Bai, Shuai and Chu, Yunfei and Cui, Zeyu and Dang, Kai and Deng, Xiaodong and Fan, Yang and Ge, Wenbin and Han, Yu and Huang, Fei and others},
  journal={arXiv preprint arXiv:2309.16609},
  year={2023}
}

@article{long2024llms,
  title={Llms are biased towards output formats! systematically evaluating and mitigating output format bias of llms},
  author={Long, Do Xuan and Ngoc, Hai Nguyen and Sim, Tiviatis and Dao, Hieu and Joty, Shafiq and Kawaguchi, Kenji and Chen, Nancy F and Kan, Min-Yen},
  journal={arXiv preprint arXiv:2408.08656},
  year={2024}
}

@inproceedings{cheng_emergence_2024,
	title = {Emergence of a {High}-{Dimensional} {Abstraction} {Phase} in {Language} {Transformers}},
	url = {https://openreview.net/forum?id=0fD3iIBhlV},
	language = {en},
	urldate = {2025-05-23},
	author = {Cheng, Emily and Doimo, Diego and Kervadec, Corentin and Macocco, Iuri and Yu, Lei and Laio, Alessandro and Baroni, Marco},
booktitle={International Conference on Learning Representations},
	year = {2025},
}

@article{rodriguez2024controlling,
  title={Controlling Language and Diffusion Models by Transporting Activations},
  author={Rodriguez, Pau and Blaas, Arno and Klein, Michal and Zappella, Luca and Apostoloff, Nicholas and Cuturi, Marco and Suau, Xavier},
  journal={arXiv preprint arXiv:2410.23054},
  year={2024}
}

@inproceedings{
li2023inferencetime,
title={Inference-Time Intervention: Eliciting Truthful Answers from a Language Model},
author={Kenneth Li and Oam Patel and Fernanda Vi{\'e}gas and Hanspeter Pfister and Martin Wattenberg},
booktitle={Thirty-seventh Conference on Neural Information Processing Systems},
year={2023},
url={https://openreview.net/forum?id=aLLuYpn83y}
}

@inproceedings{yang-klein-2021-fudge,
    title = "{FUDGE}: Controlled Text Generation With Future Discriminators",
    author = "Yang, Kevin  and
      Klein, Dan",
    editor = "Toutanova, Kristina  and
      Rumshisky, Anna  and
      Zettlemoyer, Luke  and
      Hakkani-Tur, Dilek  and
      Beltagy, Iz  and
      Bethard, Steven  and
      Cotterell, Ryan  and
      Chakraborty, Tanmoy  and
      Zhou, Yichao",
    booktitle = "Proceedings of the 2021 Conference of the North American Chapter of the Association for Computational Linguistics: Human Language Technologies",
    month = jun,
    year = "2021",
    address = "Online",
    publisher = "Association for Computational Linguistics",
    url = "https://aclanthology.org/2021.naacl-main.276",
    doi = "10.18653/v1/2021.naacl-main.276",
    pages = "3511--3535",
    abstract = "We propose Future Discriminators for Generation (FUDGE), a flexible and modular method for controlled text generation. Given a pre-existing model G for generating text from a distribution of interest, FUDGE enables conditioning on a desired attribute a (for example, formality) while requiring access only to G{'}s output logits. FUDGE learns an attribute predictor operating on a partial sequence, and uses this predictor{'}s outputs to adjust G{'}s original probabilities. We show that FUDGE models terms corresponding to a Bayesian decomposition of the conditional distribution of G given attribute a. Moreover, FUDGE can easily compose predictors for multiple desired attributes. We evaluate FUDGE on three tasks {---} couplet completion in poetry, topic control in language generation, and formality change in machine translation {---} and observe gains in all three tasks.",
}

@inproceedings{barbieri2020tweeteval,
title={{TweetEval:Unified Benchmark and Comparative Evaluation for Tweet Classification}},
author={Barbieri, Francesco and Camacho-Collados, Jose and Espinosa-Anke, Luis and Neves, Leonardo},
booktitle={Proceedings of Findings of EMNLP},
year={2020}
}

@inproceedings{rome,
 author = {Meng, Kevin and Bau, David and Andonian, Alex and Belinkov, Yonatan},
 booktitle = {Advances in Neural Information Processing Systems},
 editor = {S. Koyejo and S. Mohamed and A. Agarwal and D. Belgrave and K. Cho and A. Oh},
 pages = {17359--17372},
 publisher = {Curran Associates, Inc.},
 title = {Locating and Editing Factual Associations in GPT},
 url = {https://proceedings.neurips.cc/paper_files/paper/2022/file/6f1d43d5a82a37e89b0665b33bf3a182-Paper-Conference.pdf},
 volume = {35},
 year = {2022}
}

@inproceedings{
belrose2023leace,
title={{LEACE}: Perfect linear concept erasure in closed form},
author={Nora Belrose and David Schneider-Joseph and Shauli Ravfogel and Ryan Cotterell and Edward Raff and Stella Biderman},
booktitle={Thirty-seventh Conference on Neural Information Processing Systems},
year={2023},
url={https://openreview.net/forum?id=awIpKpwTwF}
}

@inproceedings{NEURIPS2022_b1efde53,
 author = {Ouyang, Long and Wu, Jeffrey and Jiang, Xu and Almeida, Diogo and Wainwright, Carroll and Mishkin, Pamela and Zhang, Chong and Agarwal, Sandhini and Slama, Katarina and Ray, Alex and Schulman, John and Hilton, Jacob and Kelton, Fraser and Miller, Luke and Simens, Maddie and Askell, Amanda and Welinder, Peter and Christiano, Paul F and Leike, Jan and Lowe, Ryan},
 booktitle = {Advances in Neural Information Processing Systems},
 editor = {S. Koyejo and S. Mohamed and A. Agarwal and D. Belgrave and K. Cho and A. Oh},
 pages = {27730--27744},
 publisher = {Curran Associates, Inc.},
 title = {Training language models to follow instructions with human feedback},
 url = {https://proceedings.neurips.cc/paper_files/paper/2022/file/b1efde53be364a73914f58805a001731-Paper-Conference.pdf},
 volume = {35},
 year = {2022}
}

@inproceedings{weber-etal-2023-mind,
    title = "Mind the instructions: a holistic evaluation of consistency and interactions in prompt-based learning",
    author = "Weber, Lucas  and
      Bruni, Elia  and
      Hupkes, Dieuwke",
    editor = "Jiang, Jing  and
      Reitter, David  and
      Deng, Shumin",
    booktitle = "Proceedings of the 27th Conference on Computational Natural Language Learning (CoNLL)",
    month = dec,
    year = "2023",
    address = "Singapore",
    publisher = "Association for Computational Linguistics",
    url = "https://aclanthology.org/2023.conll-1.20",
    doi = "10.18653/v1/2023.conll-1.20",
    pages = "294--313",
    abstract = "Finding the best way of adapting pre-trained language models to a task is a big challenge in current NLP. Just like the previous generation of \textit{task-tuned} models (TT), models that are adapted to tasks via in-context-learning (ICL) or instruction tuning (IT) are robust in some setups, but not in others. Here, we present a detailed analysis of which design choices cause instabilities and inconsistencies in LLM predictions. First, we show how spurious correlations between input distributions and labels {--} a known issue in TT models {--} form only a minor problem for prompted models. Then we engage in a systematic, holistic evaluation of different factors that have been found to influence predictions in a prompting setup. We test all possible combinations of a range of factors on both vanilla and instruction-tuned LLMs of different scale, and statistically analyse the results to show which factors are the most influential, the most interactive or the most stable. From our results, we deduce which factors can be used without precautions, should be avoided or handled with care in most settings.",
}

@inproceedings{krause-etal-2021-gedi-generative,
    title = "{G}e{D}i: Generative Discriminator Guided Sequence Generation",
    author = "Krause, Ben  and
      Gotmare, Akhilesh Deepak  and
      McCann, Bryan  and
      Keskar, Nitish Shirish  and
      Joty, Shafiq  and
      Socher, Richard  and
      Rajani, Nazneen Fatema",
    editor = "Moens, Marie-Francine  and
      Huang, Xuanjing  and
      Specia, Lucia  and
      Yih, Scott Wen-tau",
    booktitle = "Findings of the Association for Computational Linguistics: EMNLP 2021",
    month = nov,
    year = "2021",
    address = "Punta Cana, Dominican Republic",
    publisher = "Association for Computational Linguistics",
    url = "https://aclanthology.org/2021.findings-emnlp.424",
    doi = "10.18653/v1/2021.findings-emnlp.424",
    pages = "4929--4952",
    abstract = "",
}

@inproceedings{
park2023the,
title={The Linear Representation Hypothesis and the Geometry of Large Language Models},
author={Kiho Park and Yo Joong Choe and Victor Veitch},
booktitle={Causal Representation Learning Workshop at NeurIPS 2023},
year={2023},
url={https://openreview.net/forum?id=T0PoOJg8cK}
}

@article{elhage2021mathematical,
   title={A Mathematical Framework for Transformer Circuits},
   author={Elhage, Nelson and Nanda, Neel and Olsson, Catherine and Henighan, Tom and Joseph, Nicholas and Mann, Ben and Askell, Amanda and Bai, Yuntao and Chen, Anna and Conerly, Tom and DasSarma, Nova and Drain, Dawn and Ganguli, Deep and Hatfield-Dodds, Zac and Hernandez, Danny and Jones, Andy and Kernion, Jackson and Lovitt, Liane and Ndousse, Kamal and Amodei, Dario and Brown, Tom and Clark, Jack and Kaplan, Jared and McCandlish, Sam and Olah, Chris},
   year={2021},
   journal={Transformer Circuits Thread},
   note={https://transformer-circuits.pub/2021/framework/index.html}
}

@InProceedings{pmlr-v235-park24c,
  title = 	 {The Linear Representation Hypothesis and the Geometry of Large Language Models},
  author =       {Park, Kiho and Choe, Yo Joong and Veitch, Victor},
  booktitle = 	 {Proceedings of the 41st International Conference on Machine Learning},
  pages = 	 {39643--39666},
  year = 	 {2024},
  editor = 	 {Salakhutdinov, Ruslan and Kolter, Zico and Heller, Katherine and Weller, Adrian and Oliver, Nuria and Scarlett, Jonathan and Berkenkamp, Felix},
  volume = 	 {235},
  series = 	 {Proceedings of Machine Learning Research},
  month = 	 {21--27 Jul},
  publisher =    {PMLR},
  pdf = 	 {https://raw.githubusercontent.com/mlresearch/v235/main/assets/park24c/park24c.pdf},
  url = 	 {https://proceedings.mlr.press/v235/park24c.html},
  abstract = 	 {Informally, the "linear representation hypothesis" is the idea that high-level concepts are represented linearly as directions in some representation space. In this paper, we address two closely related questions: What does "linear representation" actually mean? And, how do we make sense of geometric notions (e.g., cosine similarity and projection) in the representation space? To answer these, we use the language of counterfactuals to give two formalizations of linear representation, one in the output (word) representation space, and one in the input (context) space. We then prove that these connect to linear probing and model steering, respectively. To make sense of geometric notions, we use the formalization to identify a particular (non-Euclidean) inner product that respects language structure in a sense we make precise. Using this <em>causal inner product</em>, we show how to unify all notions of linear representation. In particular, this allows the construction of probes and steering vectors using counterfactual pairs. Experiments with LLaMA-2 demonstrate the existence of linear representations of concepts, the connection to interpretation and control, and the fundamental role of the choice of inner product.}
}

@article{hou2024bridging,
  title={Bridging Language and Items for Retrieval and Recommendation},
  author={Hou, Yupeng and Li, Jiacheng and He, Zhankui and Yan, An and Chen, Xiusi and McAuley, Julian},
  journal={arXiv preprint arXiv:2403.03952},
  year={2024}
}

@inproceedings{NIPS2015_250cf8b5,
 author = {Zhang, Xiang and Zhao, Junbo and LeCun, Yann},
 booktitle = {Advances in Neural Information Processing Systems},
 editor = {C. Cortes and N. Lawrence and D. Lee and M. Sugiyama and R. Garnett},
 pages = {},
 publisher = {Curran Associates, Inc.},
 title = {Character-level Convolutional Networks for Text Classification},
 volume = {28},
 year = {2015}
}

@inproceedings{camacho-collados-etal-2022-tweetnlp,
    title = "{T}weet{NLP}: Cutting-Edge Natural Language Processing for Social Media",
    author = "Camacho-collados, Jose  and
      Rezaee, Kiamehr  and
      Riahi, Talayeh  and
      Ushio, Asahi  and
      Loureiro, Daniel  and
      Antypas, Dimosthenis  and
      Boisson, Joanne  and
      Espinosa Anke, Luis  and
      Liu, Fangyu  and
      Martinez Camara, Eugenio and others",
    booktitle = "Proceedings of the 2022 Conference on Empirical Methods in Natural Language Processing: System Demonstrations",
    month = dec,
    year = "2022",
    address = "Abu Dhabi, UAE",
    publisher = "Association for Computational Linguistics",
    url = "https://aclanthology.org/2022.emnlp-demos.5",
    pages = "38--49"
}

@inproceedings{liu-etal-2021-dexperts,
    title = "{DE}xperts: Decoding-Time Controlled Text Generation with Experts and Anti-Experts",
    author = "Liu, Alisa  and
      Sap, Maarten  and
      Lu, Ximing  and
      Swayamdipta, Swabha  and
      Bhagavatula, Chandra  and
      Smith, Noah A.  and
      Choi, Yejin",
    booktitle = "Proceedings of the 59th Annual Meeting of the Association for Computational Linguistics and the 11th International Joint Conference on Natural Language Processing (Volume 1: Long Papers)",
    month = aug,
    year = "2021",
    address = "Online",
    publisher = "Association for Computational Linguistics",
    pages = "6691--6706",
    abstract = "Despite recent advances in natural language generation, it remains challenging to control attributes of generated text. We propose DExperts: Decoding-time Experts, a decoding-time method for controlled text generation that combines a pretrained language model with {``}expert{''} LMs and/or {``}anti-expert{''} LMs in a product of experts. Intuitively, under the ensemble, tokens only get high probability if they are considered likely by the experts, and unlikely by the anti-experts. We apply DExperts to language detoxification and sentiment-controlled generation, where we outperform existing controllable generation methods on both automatic and human evaluations. Moreover, because DExperts operates only on the output of the pretrained LM, it is effective with (anti-)experts of smaller size, including when operating on GPT-3. Our work highlights the promise of tuning small LMs on text with (un)desirable attributes for efficient decoding-time steering.",
}

@article{Ashok_Poczos_2024, title={Controllable Text Generation in the Instruction-Tuning Era}, url={http://arxiv.org/abs/2405.01490}, DOI={10.48550/arXiv.2405.01490}, abstractNote={While most research on controllable text generation has focused on steering base Language Models, the emerging instruction-tuning and prompting paradigm offers an alternate approach to controllability. We compile and release ConGenBench, a testbed of 17 different controllable generation tasks, using a subset of it to benchmark the performance of 9 different baselines and methods on Instruction-tuned Language Models. To our surprise, we find that prompting-based approaches outperform controllable text generation methods on most datasets and tasks, highlighting a need for research on controllable text generation with Instruction-tuned Language Models in specific. Prompt-based approaches match human performance on most stylistic tasks while lagging on structural tasks, foregrounding a need to study more varied constraints and more challenging stylistic tasks. To facilitate such research, we provide an algorithm that uses only a task dataset and a Large Language Model with in-context capabilities to automatically generate a constraint dataset. This method eliminates the fields dependence on pre-curated constraint datasets, hence vastly expanding the range of constraints that can be studied in the future.}, note={arXiv:2405.01490 [cs]}, number={arXiv:2405.01490}, publisher={arXiv}, author={Ashok, Dhananjay and Poczos, Barnabas}, year={2024}, month=may }

@misc{llama3,
title={Introducing Meta Llama 3: The most capable openly available LLM to date},
author={Meta},
year={2024},
url={https://ai.meta.com/blog/meta-llama-3/}
}

@misc{jiang2023mistral,
      title={Mistral 7B}, 
      author={Albert Q. Jiang and Alexandre Sablayrolles and Arthur Mensch and Chris Bamford and Devendra Singh Chaplot and Diego de las Casas and Florian Bressand and Gianna Lengyel and Guillaume Lample and Lucile Saulnier and Lélio Renard Lavaud and Marie-Anne Lachaux and Pierre Stock and Teven Le Scao and Thibaut Lavril and Thomas Wang and Timothée Lacroix and William El Sayed},
      year={2023},
      eprint={2310.06825},
      archivePrefix={arXiv},
      primaryClass={cs.CL}
}

@misc{jigsaw-toxic-comment-classification-challenge,
    author = {CJ Adams and Jeffrey Sorensen and Julia Elliott and Lucas Dixon and Mark McDonald and Nithum and Will Cukierski},
    title = {Toxic Comment Classification Challenge},
    publisher = {Kaggle},
    year = {2017},
    url = {https://kaggle.com/competitions/jigsaw-toxic-comment-classification-challenge}
}

@InProceedings{maas-EtAl:2011:ACL-HLT2011,
  author    = {Maas, Andrew L.  and  Daly, Raymond E.  and  Pham, Peter T.  and  Huang, Dan  and  Ng, Andrew Y.  and  Potts, Christopher},
  title     = {Learning Word Vectors for Sentiment Analysis},
  booktitle = {Proceedings of the 49th Annual Meeting of the Association for Computational Linguistics: Human Language Technologies},
  month     = {June},
  year      = {2011},
  address   = {Portland, Oregon, USA},
  publisher = {Association for Computational Linguistics},
  pages     = {142--150}
}

@article{luo2023prompt,
  title={Prompt Engineering Through the Lens of Optimal Control},
  author={Luo, Yifan and Tang, Yiming and Shen, Chengfeng and Zhou, Zhennan and Dong, Bin},
  journal={arXiv preprint arXiv:2310.14201},
  year={2023}
}

@article{bhargava2023s,
  title={What's the Magic Word? A Control Theory of LLM Prompting},
  author={Bhargava, Aman and Witkowski, Cameron and Shah, Manav and Thomson, Matt},
  journal={arXiv preprint arXiv:2310.04444},
  year={2023}
}

@article{wei2022chain,
  title={Chain-of-thought prompting elicits reasoning in large language models},
  author={Wei, Jason and Wang, Xuezhi and Schuurmans, Dale and Bosma, Maarten and Xia, Fei and Chi, Ed and Le, Quoc V and Zhou, Denny and others},
  journal={Advances in neural information processing systems},
  volume={35},
  pages={24824--24837},
  year={2022}
}

@misc{cai2023transparent,
  title={Transparent and Controllable Human-Ai Interaction Via Chaining of Machine-Learned Language Models},
  author={Cai, Carrie and Wu, Tongshuang and Terry, Michael Andrew},
  year={2023},
  month=apr # "~13",
  publisher={Google Patents},
  note={US Patent App. 17/957,526}
}

@inproceedings{li2021prefix,
  title={Prefix-Tuning: Optimizing Continuous Prompts for Generation},
  author={Li, Xiang Lisa and Liang, Percy},
  booktitle={Proceedings of the 59th Annual Meeting of the Association for Computational Linguistics and the 11th International Joint Conference on Natural Language Processing (Volume 1: Long Papers)},
  pages={4582--4597},
  year={2021}
}

@inproceedings{de2021editing,
  title={Editing Factual Knowledge in Language Models},
  author={De Cao, Nicola and Aziz, Wilker and Titov, Ivan},
  booktitle={Proceedings of the 2021 Conference on Empirical Methods in Natural Language Processing},
  pages={6491--6506},
  year={2021}
}

@inproceedings{mitchell2021fast,
  title={Fast Model Editing at Scale},
  author={Mitchell, Eric and Lin, Charles and Bosselut, Antoine and Finn, Chelsea and Manning, Christopher D},
  booktitle={International Conference on Learning Representations},
  year={2021}
}

@inproceedings{dai2022knowledge,
  title={Knowledge Neurons in Pretrained Transformers},
  author={Dai, Damai and Dong, Li and Hao, Yaru and Sui, Zhifang and Chang, Baobao and Wei, Furu},
  booktitle={Proceedings of the 60th Annual Meeting of the Association for Computational Linguistics (Volume 1: Long Papers)},
  pages={8493--8502},
  year={2022}
}

@article{meng2022locating,
  title={Locating and editing factual associations in GPT},
  author={Meng, Kevin and Bau, David and Andonian, Alex and Belinkov, Yonatan},
  journal={Advances in Neural Information Processing Systems},
  volume={35},
  pages={17359--17372},
  year={2022}
}

@inproceedings{meng2022mass,
  title={Mass-Editing Memory in a Transformer},
  author={Meng, Kevin and Sharma, Arnab Sen and Andonian, Alex J and Belinkov, Yonatan and Bau, David},
  booktitle={The Eleventh International Conference on Learning Representations},
  year={2022}
}

@inproceedings{li2024pmet,
  title={Pmet: Precise model editing in a transformer},
  author={Li, Xiaopeng and Li, Shasha and Song, Shezheng and Yang, Jing and Ma, Jun and Yu, Jie},
  booktitle={Proceedings of the AAAI Conference on Artificial Intelligence},
  volume={38},
  number={17},
  pages={18564--18572},
  year={2024}
}

@inproceedings{li2023unveiling,
  title={Unveiling the Pitfalls of Knowledge Editing for Large Language Models},
  author={Li, Zhoubo and Zhang, Ningyu and Yao, Yunzhi and Wang, Mengru and Chen, Xi and Chen, Huajun},
  booktitle={The Twelfth International Conference on Learning Representations},
  year={2023}
}

@inproceedings{dathathri2019plug,
  title={Plug and Play Language Models: A Simple Approach to Controlled Text Generation},
  author={Dathathri, Sumanth and Madotto, Andrea and Lan, Janice and Hung, Jane and Frank, Eric and Molino, Piero and Yosinski, Jason and Liu, Rosanne},
  booktitle={International Conference on Learning Representations},
  year={2019}
}

@article{turner2023activation,
  title={Activation addition: Steering language models without optimization},
  author={Turner, Alex and Thiergart, Lisa and Udell, David and Leech, Gavin and Mini, Ulisse and MacDiarmid, Monte},
  journal={arXiv preprint arXiv:2308.10248},
  year={2023}
}

@inproceedings{subramani2022extracting,
  title={Extracting Latent Steering Vectors from Pretrained Language Models},
  author={Subramani, Nishant and Suresh, Nivedita and Peters, Matthew E},
  booktitle={Findings of the Association for Computational Linguistics: ACL 2022},
  pages={566--581},
  year={2022}
}

@article{konen2024style,
  title={Style Vectors for Steering Generative Large Language Model},
  author={Konen, Kai and Jentzsch, Sophie and Diallo, Diaoul{\'e} and Sch{\"u}tt, Peer and Bensch, Oliver and Baff, Roxanne El and Opitz, Dominik and Hecking, Tobias},
  journal={arXiv preprint arXiv:2402.01618},
  year={2024}
}

@article{hernandez2023inspecting,
  title={Inspecting and editing knowledge representations in language models},
  author={Hernandez, Evan and Li, Belinda Z and Andreas, Jacob},
  journal={arXiv preprint arXiv:2304.00740},
  year={2023}
}

@article{zeng2024large,
  title={Large Language Models for Social Networks: Applications, Challenges, and Solutions},
  author={Zeng, Jingying and Huang, Richard and Malik, Waleed and Yin, Langxuan and Babic, Bojan and Shacham, Danny and Yan, Xiao and Yang, Jaewon and He, Qi},
  journal={arXiv preprint arXiv:2401.02575},
  year={2024}
}

@inproceedings{yao2023editing,
  title={Editing Large Language Models: Problems, Methods, and Opportunities},
  author={Yao, Yunzhi and Wang, Peng and Tian, Bozhong and Cheng, Siyuan and Li, Zhoubo and Deng, Shumin and Chen, Huajun and Zhang, Ningyu},
  booktitle={Proceedings of the 2023 Conference on Empirical Methods in Natural Language Processing},
  pages={10222--10240},
  year={2023}
}

@book{kirk2004optimal,
  title={Optimal control theory: an introduction},
  author={Kirk, Donald E},
  year={2004},
  publisher={Courier Corporation}
}

@article{li2024inference,
  title={Inference-time intervention: Eliciting truthful answers from a language model},
  author={Li, Kenneth and Patel, Oam and Vi{\'e}gas, Fernanda and Pfister, Hanspeter and Wattenberg, Martin},
  journal={Advances in Neural Information Processing Systems},
  volume={36},
  year={2024}
}

@article{qin2022cold,
  title={Cold decoding: Energy-based constrained text generation with langevin dynamics},
  author={Qin, Lianhui and Welleck, Sean and Khashabi, Daniel and Choi, Yejin},
  journal={Advances in Neural Information Processing Systems},
  volume={35},
  pages={9538--9551},
  year={2022}
}

@article{li2022diffusion,
  title={Diffusion-lm improves controllable text generation},
  author={Li, Xiang and Thickstun, John and Gulrajani, Ishaan and Liang, Percy S and Hashimoto, Tatsunori B},
  journal={Advances in Neural Information Processing Systems},
  volume={35},
  pages={4328--4343},
  year={2022}
}

@article{lu2021neurologic,
  title={Neurologic a* esque decoding: Constrained text generation with lookahead heuristics},
  author={Lu, Ximing and Welleck, Sean and West, Peter and Jiang, Liwei and Kasai, Jungo and Khashabi, Daniel and Bras, Ronan Le and Qin, Lianhui and Yu, Youngjae and Zellers, Rowan and others},
  journal={arXiv preprint arXiv:2112.08726},
  year={2021}
}

@article{kumar2021controlled,
  title={Controlled text generation as continuous optimization with multiple constraints},
  author={Kumar, Sachin and Malmi, Eric and Severyn, Aliaksei and Tsvetkov, Yulia},
  journal={Advances in Neural Information Processing Systems},
  volume={34},
  pages={14542--14554},
  year={2021}
}

@inproceedings{Hu:etal:2022,
title={{LoRA}: Low-Rank Adaptation of Large Language Models },
author={Edward Hu and Yelong Shen and Phillip Wallis and Zeyuan Allen-Zhu and Yuanzhi Li and Shean Wang and Lu Wang and Weizhu Chen},
  booktitle={Proceedings of ICLR},
  year={2022},
  note = {Published online: \url{https://openreview.net/group?id=ICLR.cc/2022/Conference}},
  address = {Online},
}

@misc{Soatto:etal:2023,
  author    = {Stefano Soatto and Paulo Tabuada and Pratik Chaudhari and Tian Yu Liu},
  title     =  {Taming {AI} bots: Controllability of neural states in large language models},
  year      = {2023},
  howpublished={\url{https://arxiv.org/abs/2305.18449}},
}
\bibliographystyle{tmlr}

\newpage

\appendix
\renewcommand\thefigure{\thesection.\arabic{figure}}    
\renewcommand\thetable{\thesection.\arabic{table}}
\renewcommand\theequation{\thesection.\arabic{equation}}

\section{Computing resources}
\label{app:computing-resources}
\setcounter{figure}{0}    
\setcounter{table}{0} 
Experiments were run on a cluster with 12 nodes with 5 NVIDIA A30 GPUs and 48 CPUs each. 

Extracting LLM representations took a few wall-clock hours per model-dataset computation. Training linear probes took around 2 minutes per layer, so overall 64 wall-clock hours. Running evaluation experiments took a total of 30 wall-clock hours. 

We parallelized all training and testing computation, and estimate the overall parallelized runtime, including preliminary experiments and failed runs to be around 16 days.

\section{Assets}
\label{app:assets}
\setcounter{figure}{0}    
\setcounter{table}{0} 
\begin{description}
    \item[Llama] \url{https://huggingface.co/meta-llama/Meta-Llama-3-8B}; license: llama3
    \item[Mistral] \url{https://huggingface.co/mistralai/Mistral-7B-v0.1}; license: apache-2.0
    \item[Gemma] \url{https://huggingface.co/google/gemma-2-2b}; license: gemma
    \item[Qwen] \url{https://huggingface.co/Qwen/Qwen-2.5-3B}; license: qwen-research
    \item[PyTorch] \url{https://scikit-learn.org/}; license: bsd
    \item[Toxicity constraint]\url{https://huggingface.co/datasets/google/jigsaw_toxicity_pred}; license: CC0
    \item[Sentiment constraint]\url{https://huggingface.co/datasets/stanfordnlp/imdb}; license: unknown. \\
\url{https://huggingface.co/datasets/cardiffnlp/tweet_eval}; license: unknown. 
    \\ \url{https://huggingface.co/datasets/Yelp/yelp_review_full}; license: yelp-license. \\ \url{https://huggingface.co/datasets/McAuley-Lab/Amazon-Reviews-2023}; license: MIT.
    \item[Languages constraint] \url{https://huggingface.co/datasets/openlanguagedata/flores_plus}; license: CC-4.0.
    \item[Code test set] \url{https://huggingface.co/datasets/sahil2801/CodeAlpaca-20k}; license: CC-4.0.
    \item[EuroParl] \url{https://huggingface.co/datasets/Helsinki-NLP/europarl}; license: unknown.
    \item[Poems] \url{https://huggingface.co/datasets/suayptalha/Poetry-Foundation-Poems}; license: GNU Affero General Public License v3.0.
    \item[Toxicity classifier] \url{https://huggingface.co/s-nlp/roberta_toxicity_classifier}; license: OpenRAIL++.
    \item[Sentiment classifier] \url{https://huggingface.co/cardiffnlp/twitter-roberta-base-sentiment-latest}; license: CC-by-4.0.
    \item[Language classifier] \url{papluca/xlm-roberta-base-language-detection}; license: MIT.
\end{description}

 \section{Identifying the allowable region in latent space}
\label{app:identification}
\setcounter{figure}{0}    
\setcounter{table}{0} 

\setcounter{theorem}{0}

We provide the following lemma, which gives a sufficient condition for control in activation space to transfer to control in attribute space. In brief, if the probe $f_t$ at layer $t$ is the same function as the attribute scorer $\circ$ the rest of the LLM layers on all of $\mathbb R^d$, then if we constrain the image of $f_t$ by constraining the layer $t$ activations, then we equivalently constrain the end attribute.

\begin{lemma}[Identification of allowed region in activation space]
\label{lemma:activation}
    Let $x_t$ be the layer $t$ activation. Let the probe $f_t: \mathbb R^d \to \mathbb R; x_t \mapsto f_t(x_t)$ and attribute $a: \Sigma^* \to \mathbb R$ satisfy 
    \begin{equation}
    f_t (x_t)= a\circ \underbrace{l_T \circ l_{T-1} \circ \cdots \circ l_{t+1} (x_t)}_{\text{LM output}} \ \ \forall x_t \in \mathbb R^d.
    \end{equation}
    Write $a_t = a \circ l_T \circ \cdots \circ l_{t+1}$. Then, for any $\mathcal A^* \subset \mathbb R$, if  $\text{preimage}(l_{t+1}) = \text{preimage}_{f_t}(\mathcal A^*)$ for all $x_t$, then $\text{im}(a_t) = \mathcal A^*$.
\end{lemma}
\begin{proof}
    It is given that, for all $x \in \mathbb R^d$, $f_t(x) = a_t(x)$. This means that $\text{preimage}_{f_t}(\mathcal A^*) = \text{preimage}_{a_t}(\mathcal A^*)$ for all sets $\mathcal A^* \subset \mathbb R$. Suppose the pre-image of $l_{t+1}$ is modified such that $\text{preimage}(l_{t+1}) = \text{preimage}_{f_t}(\mathcal A^*)$. Applying $a_t$ to both sides yields
    \begin{align}
        a_t(\text{preimage}(l_{t+1})) &= a_t(\text{preimage}_{f_t}(\mathcal A^*)) \\
        \text{im}(a_t) &= a_t(\text{preimage}_{a_t}(\mathcal A^*)) \\
        \text{im}(a_t) &= \mathcal A^*.
    \end{align}
    This completes the proof that setting $\text{preimage}(l_{t+1}) = \text{preimage}_{f_t}(\mathcal A^*)$ constrains $\text{im}(a_t)=\mathcal A^*$.
\end{proof}

\section{Proof of Theorem \ref{thm:opt_theta}} 
\label{app:opt_theta}
\setcounter{figure}{0}    
\setcounter{table}{0} 

\setcounter{theorem}{0}

\begin{theorem}[Optimal $\theta$] 
    The optimal solution $\theta_t^*\in\mathbb R^d$ to the optimization problem \ref{eqn:relaxed_control_continuous_range} is given by

\begin{equation}\label{eqn:thm_appendix}
    \theta_t^* = 
\begin{cases}
    \frac{\nu(\alpha^{\text{max}}) - W_t^\top x_t}{\|W_t\|_2^2}W_t & \text{if } \nu (W_t^\top x_t) > \alpha^{\text{max}}, \\[10pt]
    \frac{\nu(\alpha^{\text{min}}) - W_t^\top x_t}{\|W_t\|_2^2}W_t & \text{if } \nu (W_t^\top x_t) < \alpha^{\text{min}}, \\[10pt]
    \mathbf{0} & \text{otherwise},
\end{cases}
\end{equation}
\end{theorem}

\begin{proof}
    We start by defining the Lagrangian for the optimization problem in \Cref{eqn:relaxed_control_continuous_range} as 
    \begin{align}
        L(\theta_t, \lambda_1, \lambda_2) &= \|\theta_t\|_2^2 + \lambda_1 (\alpha^{\text{min}} - \nu(W^\top (x_t + \theta_t))) + \lambda_2 (\nu(W^\top (x_t + \theta_t)) - \alpha^{\text{max}}),
    \end{align}
where $\lambda_1, \lambda_2\in\mathbb R$ are the Lagrange multipliers.

    We now solve this optimization problem by using KKT conditions, which are first-order necessary conditions for optimality:
    \begin{enumerate}
        \item Stationarity. 
        \begin{align}
        \label{eq:stationarity_mod}
            0 \in \partial (\|\theta_t\|_2^2 + \lambda_1 (\alpha^{\text{min}} - \nu(W^\top (x_t + \theta_t))) + \lambda_2 (\nu(W^\top (x_t + \theta_t)) - \alpha^{\text{max}}))
        \end{align}
        \item Complementary slackness.
        \begin{align}
        \label{eq:slackness_mod}
            \lambda_1 (\alpha^{\text{min}} - \nu(W^\top (x_t + \theta_t))) &= 0 \\
            \lambda_2 (\nu(W^\top (x_t + \theta_t)) - \alpha^{\text{max}}) &= 0
        \end{align}
        \item Primal feasibility:
        \begin{align}
            \alpha^{\text{min}} \leq \nu(W^\top (x_t + \theta_t)) \leq \alpha^{\text{max}}
        \end{align}
        \item Dual feasibility:
        \begin{equation}
            \lambda_1, \lambda_2 \geq 0
        \end{equation}
    \end{enumerate}
    
    We now consider three cases:

    \textbf{Case 1:} $\nu(W_t^\top x_t) > \alpha^{\text{max}}$

    In this case, the upper bound constraint is violated, so $\lambda_2 > 0$, $\lambda_1 = 0$. From complementary slackness,
    \begin{equation}\label{eq:constraint_alpha_max}
    \nu(W_t^\top(x_t + \theta_t)) = \alpha^{\text{max}} \quad \Rightarrow \quad W_t^\top(x_t + \theta_t) = \nu^{-1}(\alpha^{\text{max}}),
    \end{equation}
   where $\nu^{-1}$ is well defined because $\nu$ is strictly monotonic.
    Minimizing $\|\theta_t\|_2^2$ subject to \Cref{eq:constraint_alpha_max} gives:
    \begin{equation}
        \theta_t^* = \frac{\nu(\alpha^{\text{max}})-W_t^\top x_t}{\|W_t\|_2^2}W_t.
    \end{equation}
    
    \textbf{Case 2:} $\nu(W^\top x_t) < \alpha^{\text{min}}$.

    In this case, the lower bound constraint is violated, so $\lambda_2 > 0$, $\lambda_1 = 0$. From complementary slackness,
    \begin{equation}\label{eq:constraint_alpha_min}
    \nu(W_t^\top(x_t + \theta_t)) = \alpha^{\text{min}} \quad \Rightarrow \quad W_t^\top(x_t + \theta_t) = \nu^{-1}(\alpha^{\text{min}}).
    \end{equation}
   
    Minimizing $\|\theta_t\|_2^2$ subject to \Cref{eq:constraint_alpha_min} gives:
    \begin{equation}
        \theta_t^* = \frac{\nu(\alpha^{\text{min}})-W_t^\top x_t}{\|W_t\|_2^2}W_t.
    \end{equation}
    
    \textbf{Case 3:} $\alpha^{\text{min}} \leq \nu(W^\top x_t) \leq \alpha^{\text{max}}$.
    
    In this case, the score is already within the acceptable range, so no intervention is needed. Therefore,
    \begin{equation}
        \theta_t^* = 0.
    \end{equation}
    
    These three cases correspond exactly to the conditions and solutions given in \eqref{eqn:thm_appendix} of the theorem, thus completing the proof.
\end{proof}

\section{Naturalness-first formulation}
\label{app:ablation}

There is an empirical trade-off between intervention strength and text naturalness \citep{turner2023activation}: a larger intervention causes larger shifts in the language modeling distribution. This tradeoff can be formally expressed within our framework: while in \Cref{sec:setup} we present naturalness as a cost ($\min_{\theta_t} \|\theta_t\|_2^2$) and toxicity avoidance as a constraint, one can also do the opposite. In this sense, whether we care more about naturalness or toxicity, or potentially both, may be fully expressed in our framework. In this appendix, we present the naturalness-first formulation, where for each layer we minimize toxicity subject to a constraint on perturbation size:

\begin{subequations}\label{eqn:ablation_control}
    \begin{align}
    \underset{\theta_t}{\text{min}} & \qquad \nu(W_t^\top (x_t + \theta_t)) \label{eqn:tox_cost}\\
    \text{s.t.} 
    &  \qquad \|\theta_t\|_2^2 - \beta \leq 0.
    \label{eqn:constr_norm}
    \end{align}
\end{subequations} 

Here, \(\nu\) is a strictly monotonic and \emph{bounded} function that quantifies toxicity of the predicted logits. The choice of \(\nu\) can vary depending on the application, and the optimal solution remains the same for any such function due to its monotonicity.

\begin{theorem}
    The optimal $\theta_t$ to \Cref{eqn:ablation_control} is 
    \begin{align}
        \theta_t^* =\frac{\sqrt{\beta}}{\|W_t\|_2}W_t.
    \end{align}
\end{theorem}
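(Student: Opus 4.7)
The plan is to solve \Cref{eqn:ablation_control} by KKT conditions, mirroring the strategy of the proof of \Cref{thm:opt_theta} but with the roles of cost and constraint swapped. The key observation --- already extracted there --- is that the binary softmax collapses to a one-dimensional expression:
\begin{equation*}
\sigma_2(W_t^\top(x_t+\theta_t)) = \frac{1}{1+\exp(w_t^\top(x_t+\theta_t))},
\end{equation*}
so that the objective $-\log\sigma_2(W_t^\top(x_t+\theta_t))$ is a strictly monotone function of the scalar $w_t^\top\theta_t$ alone, and its gradient in $\theta_t$ is therefore a nonzero scalar multiple of $w_t$.

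I would then form the Lagrangian $L(\theta_t,\lambda) = -\log\sigma_2(W_t^\top(x_t+\theta_t)) + \lambda(\|\theta_t\|_2^2 - \beta)$ and apply KKT stationarity. The condition $\nabla_{\theta_t} L = 0$ becomes $c(\theta_t)\,w_t + 2\lambda\theta_t = 0$ for some nonzero scalar $c(\theta_t)$, which immediately forces $\theta_t^*\in\mathrm{span}\{w_t\}$. Writing $\theta_t = \gamma w_t$ reduces the problem to a one-dimensional optimization over $\gamma\in\mathbb{R}$. Since the cost is strictly monotone in $\gamma$ along this line, the unconstrained problem has no finite minimum, so $\lambda>0$, and complementary slackness forces $\|\theta_t^*\|_2^2 = \beta$. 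This pins down $|\gamma| = \sqrt{\beta}/\|w_t\|_2$, and the sign of $\gamma$ is then determined by matching the convention of \Cref{thm:opt_theta}, where moving in the $+w_t$ direction is the detoxifying direction, yielding the closed form $\theta_t^* = \tfrac{\sqrt{\beta}}{\|w_t\|_2} w_t$.

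The main obstacle here is not algebraic but conventional: one must track the sign of the gradient carefully to confirm that $+w_t$ (rather than $-w_t$) is the descent direction for toxicity under the softmax indexing used, matching \Cref{thm:opt_theta}. Once this sign audit is in hand, the remainder is a routine application of KKT stationarity combined with Cauchy--Schwarz on the line $\mathrm{span}\{w_t\}$.
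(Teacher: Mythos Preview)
Your argument is correct and reaches the same endpoint as the paper, but the packaging differs. The paper does not set up KKT conditions at all: it strips away the monotone wrappers $-\log(\cdot)$ and $\exp(\cdot)$ directly, so that the problem collapses in one line to $\max_{\theta_t} w_t^\top \theta_t$ subject to $\|\theta_t\|_2^2\le\beta$, and then reads off the maximizer of a linear functional over a Euclidean ball. Your route instead keeps the original objective, invokes Lagrangian stationarity to force $\theta_t^*\in\mathrm{span}\{w_t\}$, and then uses complementary slackness to pin the norm; this is slightly heavier machinery but has the virtue of being the same template you used for \Cref{thm:opt_theta}, so the two proofs read as a matched pair. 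Both approaches ultimately rest on the same fact---that $\sigma_2$ depends on $\theta_t$ only through the scalar $w_t^\top\theta_t$---and the paper's reduction is essentially what your stationarity condition encodes. Your flagging of the sign as the only delicate point is apt, and resolving it by consistency with \Cref{thm:opt_theta} (where $+w_t$ is the detoxifying direction) matches the paper's intended reading.
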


\begin{proof}
    By monotonicity of \(\nu\), minimizing \(\nu(W_t^\top(x_t + \theta_t))\) is equivalent to minimizing its argument. Since \(W_t^\top(x_t + \theta_t)\) affects the logits of two target classes (e.g., toxic vs. non-toxic), define \(w_1\) and \(w_2\) as the corresponding rows of \(W_t\). Then:
    \begin{align}
        \min_{\theta_t} \nu(W_t^\top(x_t + \theta_t))
        &\equiv \max_{\theta_t} W_t^\top \theta_t \\
        \text{s.t.} \quad
        &\|\theta_t\|_2^2 - \beta \leq 0.
    \end{align}
    The optimal perturbation is thus in the direction of $W_t$ with norm \(\sqrt{\beta}\):
    \begin{align}
        \theta_t^* =\frac{\sqrt{\beta}}{\|W_t\|_2}W_t.
    \end{align}
\end{proof}

\section{LiSeCo compared to other activation methods} \label{app:liseco-act-reft}
Here, we provide a formal comparison of LiSeCo with other state-of-the-art methods for intervening representations. {We remark that all of these methods rely, explicitly or implicitly, on the \emph{Linear Representation Hypothesis} \cite{park2023the}.}

\subsubsection*{Online representation interventions}
A multitude of approaches exist in the literature that linearly intervene the per-layer representations in an analogous way to LiSeCo. Here, we focus on the AcT approach \citep{rodriguez2024controlling}. This approach is based on an optimal transport map, and it was shown to generalize previously proposed approaches (interested readers are referred to Table 1 in \citep{rodriguez2024controlling}). The resulting intervention, expressed in the LiSeCo notation, is of the form:
\begin{equation}\label{eqn:act}
    \theta_t= M_t x_t +\beta_t,
\end{equation}
where $M_t:=\text{diag}(\omega_t)$, and  $\omega_t,\beta_t\in\mathbb{R}^d$ are element-wise scaling and bias terms that are estimated from data for each layer $t$. Specifically, they are computed from a set of activations corresponding from source and target examples that exhibit the desired shift in behavior, and are chosen to minimize the squared distance between transformed source activations and their target counterparts under a univariate optimal transport objective (see \citep{rodriguez2024controlling} for details). We remark that in order for AcT to achieve good performance, it needs access to the extremes of the distribution. Without access to these extremes, the transformation does not interpolate well.

\begin{proposition}\label{prop:liseco-act}
The LiSeCo intervention provided in Theorem \ref{thm:opt_theta_continuous_range} can be written in the same affine form as the AcT transformation \eqref{eqn:act}, that is, where the matrix $M \in \mathbb{R}^{d\times d}$ and the bias vector $\beta \in \mathbb{R}^d$ are given by
\begin{subequations}
\begin{align}
   &M_t = -\frac{W_t W_t^\intercal}{\|W_t\|_2^2}, &\beta_t = \frac{\nu^{-1}(\alpha^{\text{max}}) W_t}{\|W_t\|_2}, \qquad &\text{if }\nu (W_t^\top x_t) > \alpha^{\text{max}},\\
   &M_t = -\frac{W_t W_t^\intercal}{\|W_t\|_2^2}, &\beta_t = \frac{\nu^{-1}(\alpha^{\text{min}}) W_t}{\|W_t\|_2}, \qquad &\text{if }\nu (W_t^\top x_t) < \alpha^{\text{min}},\\
   &M_t = 0, &\beta_t = 0, \qquad &\text{otherwise}.
\end{align}
\end{subequations}
\end{proposition}

\begin{proof}
We start from the expression for the optimal intervention $\theta_t^*$ given in Theorem~\ref{thm:opt_theta_continuous_range}, which defines three cases based on the quantile $\nu(W_t^\top x_t)$:

\begin{itemize}
    \item If $\nu(W_t^\top x_t) > \alpha^{\text{max}}$, then
    \[
    \theta_t^* = \frac{\nu^{-1}(\alpha^{\text{max}}) - W_t^\top x_t}{\|W_t\|_2^2} W_t
    \]
    \item If $\nu(W_t^\top x_t) < \alpha^{\text{min}}$, then
    \[
    \theta_t^* = \frac{\nu^{-1}(\alpha^{\text{min}}) - W_t^\top x_t}{\|W_t\|_2^2} W_t
    \]
    \item Otherwise, $\theta_t^* = 0$
\end{itemize}

In both non-zero cases, the expression can be rewritten in affine form:
\[
\theta_t^* = \left( \frac{\nu^{-1}(\alpha) }{\|W_t\|_2^2} \right) W_t - \left( \frac{W_t W_t^\top}{\|W_t\|_2^2} \right) x_t,
\]
where $\alpha = \alpha^{\text{max}}$ or $\alpha^{\text{min}}$ depending on the condition.

Let us define:
\[
M_t = -\frac{W_t W_t^\top}{\|W_t\|_2^2}, \qquad \beta_t = \frac{\nu^{-1}(\alpha) W_t}{\|W_t\|_2^2}.
\]

Then we have:
\[
\theta_t^* = M_t x_t + \beta_t.
\]

Finally, when the condition is not triggered (i.e., $\alpha^{\text{min}} \leq \nu(W_t^\top x_t) \leq \alpha^{\text{max}}$), we have $\theta_t^* = 0$, which corresponds to $M_t = 0$ and $\beta_t = 0$.

Thus, in all three cases, $\theta_t^*$ can be written in the affine form $\theta_t = M_t x_t + \beta_t$ with the expressions for $M_t$ and $\beta_t$ given in the proposition.
\end{proof}

LiSeCo offers a versatile version of AcT, since it is capable of steering along the two directions of the real line, as opposed of only one as in the case of all other interventions with one single $(M_t,\beta_t)$ combinations.
While the general version of AcT, Linear-AcT, appears to be more general since $rank(M_t)=1$ for LiSeCo, it does so at the cost of increased computational overhead. The simplified version of AcT, mean-AcT, applies a transformation assuming equal variance and no directional structure. LiSeCo, which can also be seen as a constrained optimal transport problem per Proposition \ref{prop:liseco-act}, introduces a rank-one intervention along a learned direction \( W_t \), guided by a quantile target \( \nu^{-1}(p) \). This allows LiSeCo to modulate representations in a concept-specific way, providing finer control while remaining lightweight. A key advantage in our setting is that the intervention direction \( W_t \) is learned via a classifier trained directly on unpaired data, removing the need for aligned source--target pairs as required by AcT. Moreover, the LiSeCo yields a principled and guaranteed form of intervention currently lacking in all other online representation intervention approaches.

\subsubsection*{Offline representation interventions}

A prominent example of offline representation interventions is ReFT \citep{wu2024reft}, which includes DiReFT and LoReFT as specific instances. These methods edit representations by learning an intervention that is applied post-hoc, typically at specific layers or token positions. Unlike online methods such as AcT or LiSeCo that modify representations at inference time, ReFT-based methods operate in an offline setting, learning from examples and intervening only once representations are computed. For instance, ReFT requires a full forward pass to obtain the activations to be edited, followed by a second forward pass with the edited activations injected. This design makes ReFT suitable for interventions at the prompt level (e.g., steering generation from the start), but less suited for dynamic, token-level control during generation. In this sense, ReFT can be viewed as a form of open-loop control realized through mechanistic edits to internal activations.

In what follows, we show that ReFT can also be interpreted under a control optic. In particular, for LoReFT (the most general ReFT proposal), we observe that it can be interpreted as a solution to a constrained optimal control problem, where the goal is to find the smallest possible intervention that maps a given representation onto a desirable subspace.

\begin{proposition}\label{prop:lo-reft-optimal-control}
The LoReFT intervention of the form
\[
\theta = R^\intercal (W x + b - R x)
\]
is the unique solution to the following constrained optimization problem:
\[
\min_{\theta} \|\theta\|^2_2 \quad \text{subject to} \quad R(x + \theta) = W x + b.
\]
\end{proposition}

\begin{proof}
We formulate the Lagrangian for the constrained optimization problem:
\[
\mathcal{L}(\theta, \lambda) = \|\theta\|_2^2 + \lambda^\top \left( R(x + \theta) - (W x + b) \right),
\]
where \( \lambda \) is the vector of Lagrange multipliers. Taking the gradient with respect to \( \theta \) and setting it to zero:
\[
\nabla_\theta \mathcal{L} = 2\theta + R^\top \lambda = 0 \quad \Rightarrow \quad \theta = -\frac{1}{2} R^\top \lambda.
\]

Substitute this into the constraint:
\begin{align*}
R(x + \theta) &= W x + b, \\
R x - \frac{1}{2} R R^\top \lambda &= W x + b, \\
\Rightarrow -\frac{1}{2} \lambda &= (W - R)x + b, \quad \text{(since $R R^\top = I$)}, \\
\Rightarrow \lambda &= -2\left( (W - R)x + b \right).
\end{align*}

Now substitute back to recover \( \theta \):
\[
\theta = -\frac{1}{2} R^\top \lambda = R^\top \left( (W - R)x + b \right) = R^\top (W x + b - R x),
\]
which matches the LoReFT formula. Hence, \( \theta \) is the unique solution to the constrained optimization problem.
\end{proof}

This result gives LoReFT a principled interpretation as a control-theoretic intervention: the minimal intervention needed to achieve a desired behavior under a structured rotation constraint. If paired source--target representations are available, LoReFT can be trained analogously to AcT using regression objectives on paired data. However, unlike AcT, LoReFT learns a transformation that is constrained to be consistent with a linear rotation and projection, rather than a full-rank affine map. Compared to LiSeCo, which also minimizes intervention norm under a linear constraint but does so dynamically and online, LoReFT is currently an offline method.

\section{Data preprocessing}
\label{app:data}
\setcounter{figure}{0}    
\setcounter{table}{0} 

For the \textbf{sentiment} constraint set, the following extra steps were taken to preprocess the data:
\begin{enumerate}
    \item Tweets: we mapped labels \emph{neutral} and \emph{positive} to not \emph{negative}
    \item Yelp and Amazon: ratings are integers 1 to 5 stars, inclusive. We removed 3-star reviews and mapped everything above to not \emph{negative} and below to \emph{negative}.
\end{enumerate}
The IMDb dataset's labels were already binary in $\{\text{negative}, \text{non-negative}\}$.

All sentiment constraint datasets were downloaded from HuggingFace using the \texttt{train} split.

\section{Linear Probes}
\label{app:probes}
\setcounter{figure}{0}    
\setcounter{table}{0} 

\subsection{Setup}
For each model and layer, we train one binary classifier linear probe with the following hyperparameters: 
\begin{itemize}
    \item Number of epochs: 1000
    \item lr: 1e-3
    \item Optimizer: Adam (with default PyTorch hyperparameters)
\end{itemize}
\Cref{fig:linear_probes} shows the per-layer probe validation accuracy across all models. Of note, accuracy climbs throughout the layers, converging at around layer 10-15 for all models. 
 Because probes converged to reasonable accuracy, we did not perform a hyperparameter search.

\subsection{Probe training stress test}
Here, we provide a proof-of-concept of probe performance with respect to number of training points for Llama, on the toxicity constraint set. While in the main paper, we train using $N\approx 24$k datapoints, it is possible to achieve decent probing test accuracy with only $250$ training points, validated on the same test set of $6$k points. The scaling behavior per-layer is shown in \Cref{fig:stresstest}.

\begin{figure}
    \centering
    \includegraphics[width=0.5\linewidth]{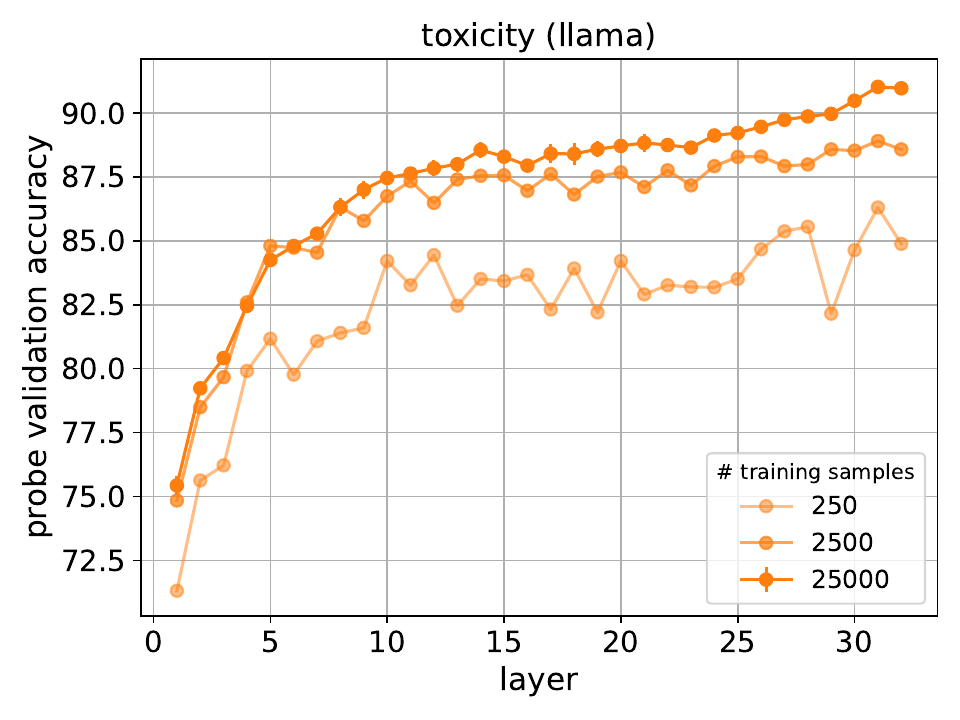}
    \caption{Linear probe validation accuracy for toxicity on Llama, varying the number of training points. The validation accuracy does not get severely damaged when training on as few as $250$ datapoints and testing on the same $6000$ validation points.}
    \label{fig:stresstest}
\end{figure}

\section{Instruction-tuning}
\label{app:instruct}
\setcounter{figure}{0}    
\setcounter{table}{0} 

\subsection{Setup}
For Gemma, Llama and Mistral, publicly available intruction-tuned variants were available. In particular, we use the Gemma-2-2B-IT, Llama-3-8B-Instruct and Mistral-7B-Instruct-v0.2 models from HuggingFace. To prompt the instruction-tuned models, we slightly modified the system prompt of Mistral \citep{jiang2023mistral}:

\noindent
\texttt{Instructions: \\
Always assist with care, respect, and truth. Respond with utmost utility yet securely. Avoid harmful, unethical, prejudiced, or negative content. Ensure replies promote fairness and positivity. With this in mind, please continue the following text.\\ \\
Text: \\ {PROMPT}
}

where we replace \texttt{{PROMPT}} with the natural language prompt. For the language steering task, the system prompt was modified as follows:

\noindent
\texttt{Instructions:\\
Please continue the following text in Spanish. Write only the text and nothing else.\\ \\ 
Text: \\
PROMPT}

When evaluating model continuations, we only retain the text including and after \texttt{{PROMPT}}. The instructions were the same for both toxicity and negativity reduction tasks.

\section{Activation Addition Implementation}
\label{app:actadd}
\setcounter{figure}{0}    
\setcounter{table}{0} 

\subsection{Setup}
We closely follow the setup detailed in Appendix B of \cite{turner2023activation}, testing recommended ranges. Although we do not vary the prompts, we perform a coarse-grained hyperparameter grid search on the intervention layer $l$ and intervention strength $c$:
\begin{itemize}
    \item Toxicity (source, target) prompts: (toxicity, kindness)
    \item Sentiment (source, target) prompts: (optimism, despair)
    \item Language (source, target) prompts: (hello, hola)
    \item Intervention layer $l$: $\{6, 15, 24\}$
    \item Intervention strength $c$: $\{0.01, 0.1, 1, 3, 9, 15\}$
\end{itemize}
We apply the intervention at the first token position as reported in \cite{turner2023activation}. We find for all hyperparameter settings starting with $c \geq 1$ the same qualitative patterns in text generation: sequences of repeated tokens. The best hyperparameter setting we found corresponded to $(c,l)=(1,15)$ for both tasks.

\section{Already-safe generations}
\label{app:already_safe}
Ideally, intervention should not turn safe generations toxic or negative. \Cref{fig:already-safe} shows the distribution of these would-be safe generations, under different intervention methods. While LiSeCo \emph{abstains} from intervening if the generation is already classified safe, AcT, ActAdd, and prompting with Instruct are always applied. We see in the figure that LiSeCo (blue) well-preserves the original safety and naturalness distribution as desired, while other methods may negatively impact either factor.

\begin{figure}[H]
    \centering
    \includegraphics[width=0.9\linewidth]{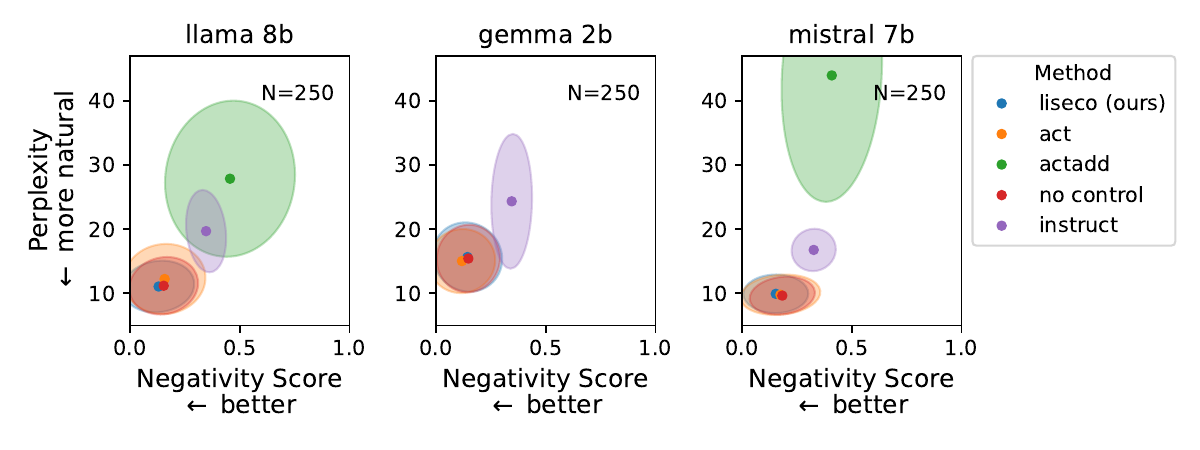}\\
    \includegraphics[width=0.9\linewidth]{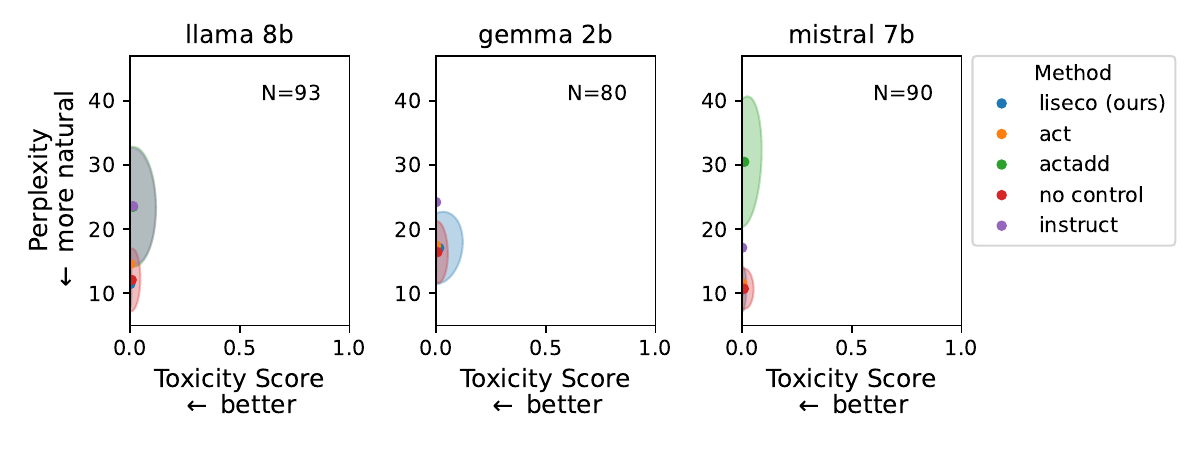}\\
    \includegraphics[width=0.9\linewidth]{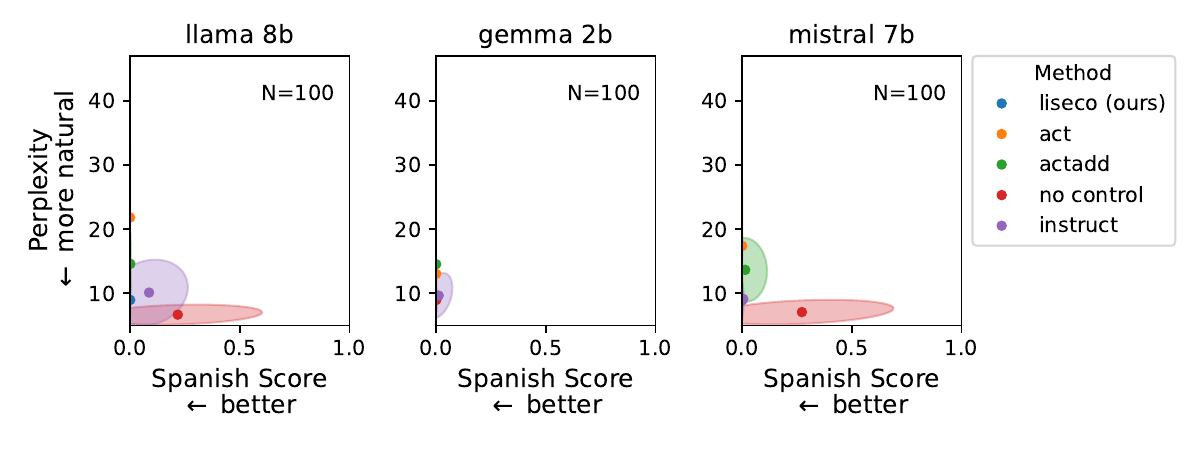}\\
    \caption{\textbf{Already-safe generations remain safe under intervention.} We show the safety-naturalness plane for would-be \emph{safe} generations. While in all settings, intervention via prompting (Instruct, purple), best ActAdd setting (green), and to a lesser extent AcT (orange) compromise naturalness compared to the baseline (red), the minimum-norm design of LiSeCo (blue) preserves naturalness. In all cases, the baseline safety-naturalness distribution for safe generations is well-preserved by LiSeCo, where others may fail.}
    \label{fig:already-safe}
\end{figure}
\newpage 
\section{LiSeCo Output Generation Distributions}

\begin{figure}[H]
    \centering
\includegraphics[width=0.75\linewidth]{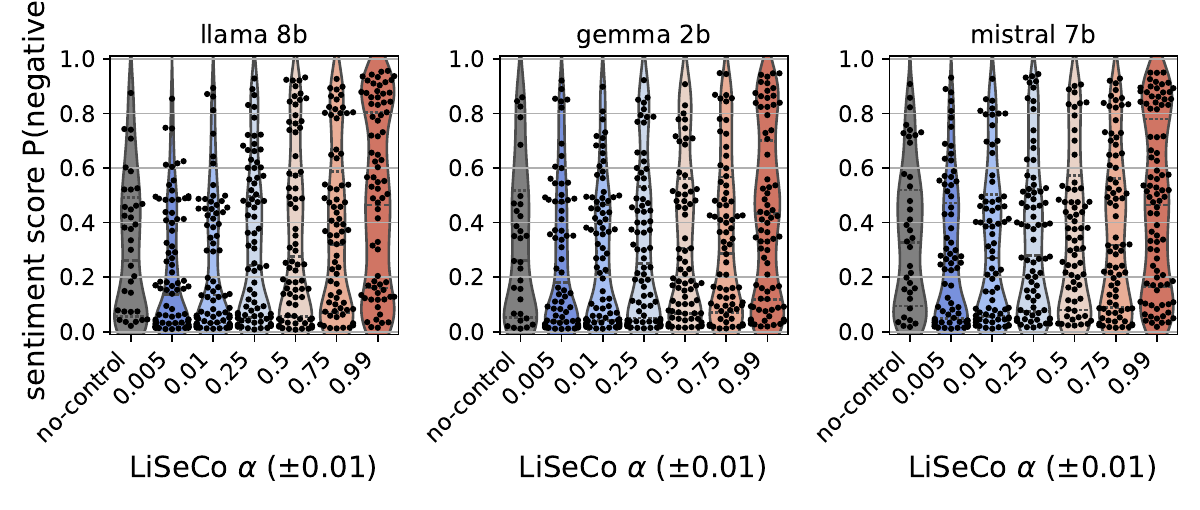} \\
    \includegraphics[width=0.75\linewidth]{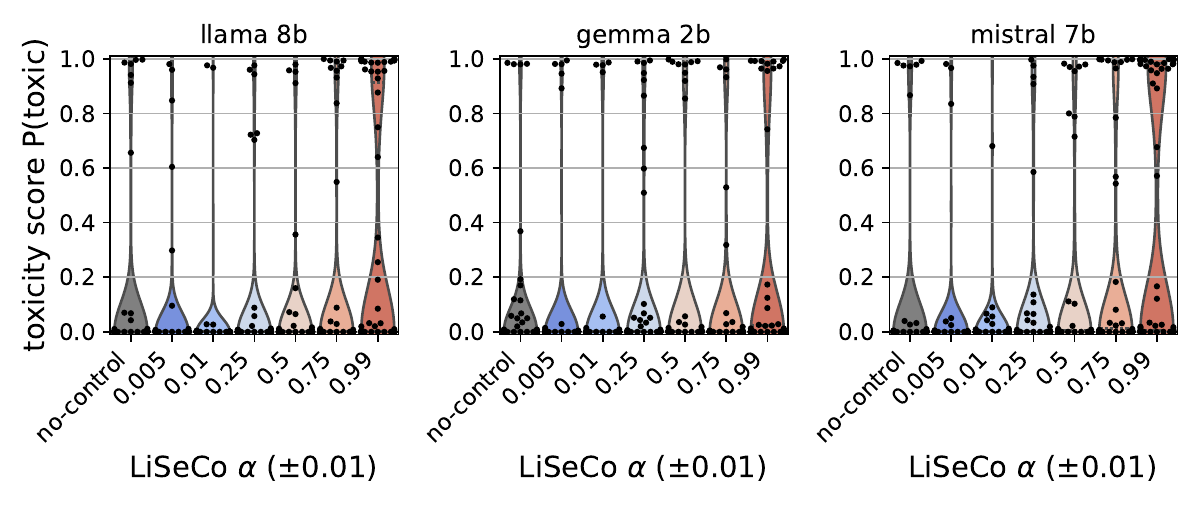}\\
    \includegraphics[width=0.75\linewidth]{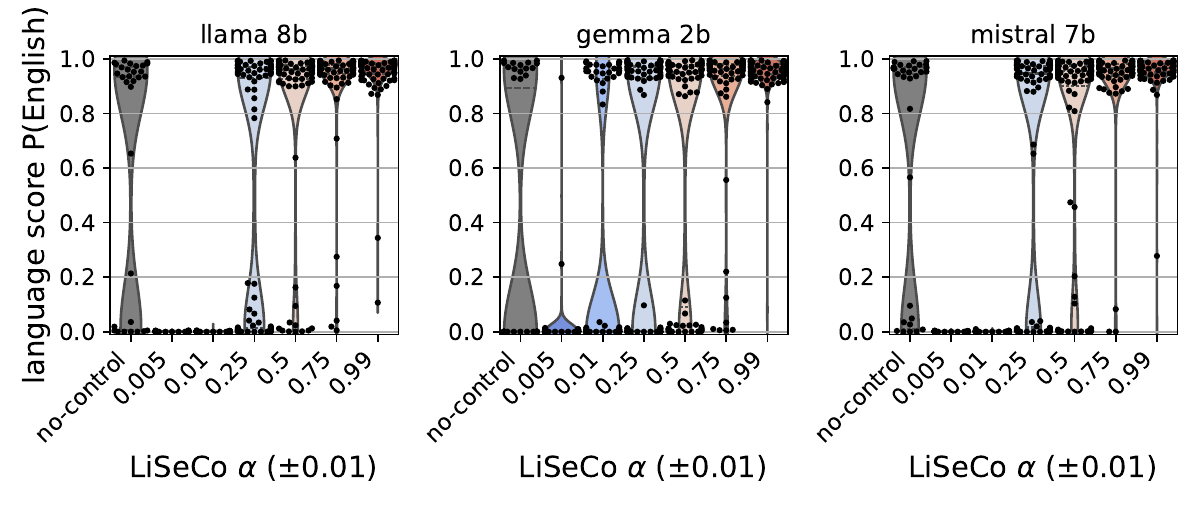}\\
    \caption{\textbf{Attribute distributions for LiSeCo.} To complement \Cref{fig:steering}, we show the full distributions of attributes for generations intervened with LiSeCo, as well as without (gray). The sentiment task (top), toxicity (middle), and language Eng/Spa (bottom) are shown; each point is a single generation, scored by the corresponding attribute classifier. For sentiment, there is a clear gradation in the attribute distribution with increasing LiSeCo $\alpha$, where, in general, lower $\alpha$ leads to less negativity/toxicity/English in the generation. For toxicity and language, this effect is less visible in the plot as the distributions are bimodal, likely due to the bimodality of the labels in the training set. However, the same gradation exists, where the proportion of toxic/English generations is indeed monotonic in $\alpha$, see \Cref{fig:steering}. }
    \label{fig:actual_distributions}
\end{figure}

\newpage 
\section{Additional Results: Layers Ablation}
\label{app:ablation}
\setcounter{figure}{0}    
\setcounter{table}{0} 

To illustrate the choice of which layers to steer, we report results from a coarse ablation, testing (1) whether to steer a single or multiple layers; (2) which layers (first, last, all) of the model result in the best performance. The recommendation to steer the last (roughly) two-thirds of layers is illustrated by the example results on English to Spanish steering in \Cref{tab:ablation}. The first three rows of the table steer a single layer (the first, halfway, and last), and the bottom three rows steer multiple layers. First, multiple layers always worked better than steering a single layer. Then, when selecting which region of the model to steer (first layers, last layers, all), we found that steering the first layers led to degraded performance, possibly because the first layers are where a semantic representation of the input is still being built \citep{lad2025remarkable}. Finally, steering the last layers of the model (layer index $\geq 8$, roughly two-thirds) yielded the best performance when taking into account both attribute steering and perplexity. Results generalized across domains.

\begin{table}
\begin{tabular}{lcccccc}
\toprule
 & \multicolumn{3}{c}{P(Spanish) $\uparrow$} & \multicolumn{3}{c}{PPL $\downarrow$} \\
 & llama 8B & gemma 2b & mistral 7b & llama 8B & gemma 2b & mistral 7b \\
\midrule
first & 0.437$_{\textcolor{gray}{\scriptsize 0.0483}}$ & 0.456$_{\textcolor{gray}{\scriptsize 0.0495}}$ & 0.467$_{\textcolor{gray}{\scriptsize 0.0495}}$ & 10.606$_{\textcolor{gray}{\scriptsize 0.4312}}$ & 10.173$_{\textcolor{gray}{\scriptsize 0.3164}}$ & 16.206$_{\textcolor{gray}{\scriptsize 1.2409}}$ \\
middle & 0.783$_{\textcolor{gray}{\scriptsize 0.0373}}$ & 0.456$_{\textcolor{gray}{\scriptsize 0.0495}}$ & 0.574$_{\textcolor{gray}{\scriptsize 0.0481}}$ & 8.994$_{\textcolor{gray}{\scriptsize 0.4453}}$ & 11.050$_{\textcolor{gray}{\scriptsize 0.4275}}$ & 8.575$_{\textcolor{gray}{\scriptsize 0.2863}}$ \\
last & 0.400$_{\textcolor{gray}{\scriptsize 0.0482}}$ & 0.446$_{\textcolor{gray}{\scriptsize 0.0494}}$ & 0.416$_{\textcolor{gray}{\scriptsize 0.0484}}$ & 8.567$_{\textcolor{gray}{\scriptsize 0.4246}}$ & 10.593$_{\textcolor{gray}{\scriptsize 0.3037}}$ & 7.845$_{\textcolor{gray}{\scriptsize 0.3222}}$ \\
first N<8 & 0.803$_{\textcolor{gray}{\scriptsize 0.0365}}$ & 0.456$_{\textcolor{gray}{\scriptsize 0.0495}}$ & 0.763$_{\textcolor{gray}{\scriptsize 0.0403}}$ & 14.059$_{\textcolor{gray}{\scriptsize 0.5740}}$ & 10.658$_{\textcolor{gray}{\scriptsize 0.3029}}$ & 19.479$_{\textcolor{gray}{\scriptsize 2.2595}}$ \\
\rowcolor{yellow!20}
\textbf{last N$\geq$8} & 0.987$_{\textcolor{gray}{\scriptsize 0.0004}}$ & 0.919$_{\textcolor{gray}{\scriptsize 0.0238}}$ & 0.988$_{\textcolor{gray}{\scriptsize 0.0004}}$ & 11.017$_{\textcolor{gray}{\scriptsize 0.4809}}$ & 10.629$_{\textcolor{gray}{\scriptsize 0.3430}}$ & 11.818$_{\textcolor{gray}{\scriptsize 0.4701}}$ \\
all & 0.987$_{\textcolor{gray}{\scriptsize 0.0005}}$ & 0.964$_{\textcolor{gray}{\scriptsize 0.0139}}$ & 0.975$_{\textcolor{gray}{\scriptsize 0.0095}}$ & 21.531$_{\textcolor{gray}{\scriptsize 1.8994}}$ & 10.381$_{\textcolor{gray}{\scriptsize 0.3824}}$ & 41.760$_{\textcolor{gray}{\scriptsize 1.9978}}$ \\
\bottomrule
\end{tabular}
\caption{\textbf{LiSeCo layer ablation on English to Spanish, steering effectiveness (P(Spanish)) and perplexity.} The first three rows steer a \emph{single} layers and the last three steer \emph{multiple} layers.}
\label{tab:ablation}
\end{table}

\section{Additional Results: Text Naturalness}
\label{app:naturalness}
\setcounter{figure}{0}    
\setcounter{table}{0} 

\subsection{In-distribution examples} \Cref{tab:toxicity-example-app} shows a representative prompt from the toxicity task set on all baselines: no-control, instruction-tuned, LiSeCo ($p=0.01, 0.1$), ActAdd (best), and AcT ($1.0$). 

\subsection{OOD examples: English$\to$Spanish} {We show several examples of LiSeCo $(\alpha^{\min}, \alpha^{\max}$ steering English to Spanish on an in-distribution sample, poems, European parliament proceedings, and code in \Cref{tab:ood_examples}. The prompts are shown on the left and model generations on the right. Qualitatively, LiSeCo performs well on the in-distribution as well as on OOD samples. In the poorest-performing model (Gemma), LiSeCo requires several token generations in order to switch to Spanish.}

\begin{table}
\scalebox{0.95}{
\begin{tabular}{p{1.2cm} | p{3.5cm} p{3.5cm} p{3.5cm} p{3.5cm}}
\toprule
        &Prompt & Llama 8b & Gemma 2b & Mistral 7b \\
        \midrule 
    In-distn & This flight is not the only option if you want & ir a vuelo de ida y vuelta. Puedes ver otras opciones en la página principal.
La mejor forma para comparar los precios del vuelo es con el motor de búsqueda de SkyScanner, que te permite encontrar un precio desde tu aeropuerto de salida hasta 300 millones
& to fly from San Francisco (SFO) to Honolulu. You can also take a vuelo de avión desde SFO hasta HNL con American Airlines, Delta o United que te llevará al aeropuerto internacional Daniel K Inouye International Airport en 10 horas y 25 minutos aproximadamente &  to ir de Madrid a Los Ángeles. Encontrarás otras opciones en nuestro sitio web:
Los Angeles, CA (LAX) desde \$498 | El Paso, TX (ELP) desde \$576 | San Diego, CA (SAN) desde \$593 | Las Vegas, NV (LVN) desde \$512 | Phoenix, AZ (PHX) desde \$599 | Tuc"\\
    Poems & The old cupola glinted above the clouds, shone among fir & firramentos y cumbres de nieve. El viejo campanero era un símbolo del pueblo.
El viejo campanero tocó la antigua campana con su antiguada llave. La música sonaba en las montañas.
La melodía se escuchó entre los árboles; el viento sopló por encima de sus ramas.
 & trees and snow-covered roofs. A few minutes later la casa de los cuervos se vio desde el otro lado del rio y las casas que estaban al norte eran todas blancas con sus tejados cubiertos por nieve. &  y la niebla. La noche estaba a punto de terminar. El sol se levantaba en el horizonte.
The sunrise era increíblemente bonita. La luz del día comenzó a brillar entre las sombras de los árboles. El cielo era un maravilloso color azul con una mezcla de rojo, amarillo y verde.\\
\midrule 
    EuroParl & You will be aware from the press and television that & el presidente de los Estados Unidos, Donald Trump (Donald J. Trump), ha declarado la emergencia nacional en respuesta a la pandemia del coronavirus y que está pidiendo al Congreso autorización para gastar hasta 1 billón de dólares (\$ 1 mil millones) para combatirlo. 
 & The UK Government has recently published its National Cyber Security Strategy (NCSS) which sets out how it intends to tackle this issue, including: the creation of 250 nuevos puestos de trabajo en el sector del ciberespacio para ayudar al Reino Unido con su estrategia nacional & launches de los nuevos vehículos eléctricos se han convertido en una especie de fiesta para las marcas, con presentaciones que van desde la más simple hasta lo más espectacular. La última fue la del nuevo Ford Mustang Mach-E, que se ha llevado a cabo en Nueva York y que ya está disponible para reservar en España. \\
 \midrule 
 Code & arr = [2, 4, 6, 8, 10] & \# arreglo con los valores a comparar
valor\_a\_comparar = int(input('ingrese el valor que desea buscar: ')) \# variable para almacenar la cantidad de veces que se repite un número & print(f'O vetor é: {arr}') \# O primeiro elemento do array começa na posição zero. Para acessar o segundo elemento (o índice ""i"" passa de 0 para 1), você precisa subtrair um valor de uma unidade da primeira posição. & arr.each do |num|
    puts ""\#{num} es un número par"" si num \% 2 == cero \# El operador de asignación se puede utilizar en el bloque para devolver una versión modificada del arreglo
end
\end{tabular}}
\caption{\textbf{OOD examples, LiSeCo English to Spanish.}}
\label{tab:ood_examples}
\end{table}

\begin{table}[hbtp!]
  \centering
  \scalebox{0.85}{
  \begin{tabular}{|p{0.98cm}|p{4.55cm}|p{4.55cm}|p{4.55cm}|}
    \hline
    \multicolumn{4}{|c|}{\textbf{Prompt:} ``"Every Monday we publish the most unbelievable stories"} \\ \hline \hline
     & \textbf{Meta-Llama-3-8B} & \textbf{Mistral-7B-v0.1} & \textbf{Gemma-2-2b} \\ \hline
    No-control & \textit{from the world of football.
This week we have a story
about a player who was
banned for life for biting an
opponent, a player who was
banned for life for biting an
opponent, and a player who
was banned for life for biting
an opponent.} & \textit{of climbing stupidity submitted by our readers. See something unbelayvable? Tell us
in the comments and your
story could be featured in a
future edition, online or in
print. For more Unbelayvable, check out the Un} & \textit{in the News archives. Here is the weirdest story
we could find from this week 20 years ago today. A former New Milford teacher has been charged with sexual assault after a former
student claims sexual intercourse took place} \\ \hline
    Instruct & \textit{from the world of science,
technology, and innovation.
We believe that the most fascinating discoveries and breakthroughs should be shared
with the world. Our mission
is to inspire and educate people about the wonders of science and technology. We} & \textit{"from the previous week.
Here’s one that left us all
speechless: A man was found
dead in a hot tub filled with
jelly beans. The police are
still investigating the cause
of death.Response:"} & \textit{from around the world that will make you laugh out loud!} \\ \hline
    LiSeCo (0.01) & \textit{from our community. If you have a story to share, please send it in here. The best way to get started is by joining one of our many groups on Facebook or Discord and introducing yourself! We are always happy to welcome new members into our family!} & \textit{of climbing weirdness and hubris from the previous 7 days. Don’t miss a single edition. Subscribe to our weekly newsletter, The Dirtbag. A New Climbing Boulder in Colorado Is So Good It Has Already Been Vandalized. A new bouldering problem has been discovered on } & \textit{from around the world. 1) A man in Russia has been arrested for stealing a cow, and then trying to sell it as his own pet dog! The animal lover had posted pictures on social media showing him with what appeared to be two dogs but when officers arrived at their home} \\ \hline
    LiSeCo (0.1) & \textit{from around the world. We have a lot of fun doing it, and hope you do too.If there’s something that has happened to you or someone else in your life which is so bizarre it could only be true – please get in touch with us at [email xa0protected]. You can also send pictures/videos if } & \textit{of climbing stupidity submitted by our readers. See something unbelayvable? Email unbelayvable@climbing.com and your story could be featured online or in print. For more Unbelayvable, check out the Unbelayvable Archives.} & \textit{from around the world. This week, a man in India was arrested for stealing 10 million rupees (about \$257,684) worth of gold bars and then hiding them under his bed! The story starts with an anonymous tip-off to police that someone had stolen over Rs 9 crore (\$3M USD).} \\ \hline
    ActAdd (best) & \textit{from our readers. Today, a woman who helps strangers in need. A few years ago I was driving home on my way to work and saw an old lady at the side of the road with her hand outsted for help A stranger helped me when she did something that changed} & \textit{of kindness that come to us from around the world.This week, a stranger’s act made all the difference for an 8-year old boy…\#\# A Kind Stranger I was in line at Walmart when I saw this little girl standing behind me with her mom and dad. She looked sad.} & \textit{from around the world., and is always full of surprises...The first thing you do when waking up in a new place: make friends with animals or pets? The story begins at 10 years ago.. https://of-kindness/stories/… /to\_a… to an animal that has been kind} \\ \hline 
    AcT (1.0) & \textit{from our community. This week, we’re featuring a story about how one woman’s love for her dog inspired her to start an online business.
I’m so excited that you’ve joined us today! I am passionate about helping women like yourself who are } & \textit{of escape that came to us from our readers. This week, a story about an American woman who was captured by the Japanese in 1942 and spent three years as a prisoner of war. The following is adapted from “Sisters at War: A True Story of World War II” } & \textit{from around our community. This week, a man in New York City was arrested for allegedly stealing \$1 million worth of goods at an Amazon warehouse and then selling them on eBay; two people were killed when } \\ \hline
  \end{tabular}}
  \caption{\textbf{Examples for toxicity steering, all baselines.}}
    \label{tab:toxicity-example-app}
\end{table}

\end{document}